%% file: example_paper.tex
\documentclass{article}

\usepackage{microtype}
\usepackage{graphicx}
\usepackage{subcaption}
\usepackage{booktabs} 

\usepackage{hyperref}

\usepackage[preprint]{icml2026}

\usepackage{amsmath}
\usepackage{amssymb}
\usepackage{mathtools}
\usepackage{amsthm}

\usepackage[capitalize,noabbrev]{cleveref}

\theoremstyle{plain}
\newtheorem{theorem}{Theorem}[section]

\theoremstyle{definition}

\theoremstyle{remark}

\usepackage[textsize=tiny]{todonotes}

\usepackage{enumitem}
\usepackage{multirow}
\usepackage{tabularx}

\usepackage[toc,page,header]{appendix}
\usepackage{minitoc}

\icmltitlerunning{Steer2Edit: From Activation Steering to Component-Level Editing}

\begin{document}

\twocolumn[
  \icmltitle{Steer2Edit: From Activation Steering to Component-Level Editing}



  \icmlsetsymbol{equal}{*}

  \begin{icmlauthorlist}
    \icmlauthor{Chung-En Sun}{cse}
    \icmlauthor{Ge Yan}{equal,cse}
    \icmlauthor{Zimo Wang}{equal,hdsi}
    \icmlauthor{Tsui-Wei Weng}{hdsi}
  \end{icmlauthorlist}

  \icmlaffiliation{cse}{Department of Computer Science and Engineering, UC San Diego}
  \icmlaffiliation{hdsi}{Halıcıoğlu Data Science Institute, UC San Diego}
  
  \icmlcorrespondingauthor{Chung-En Sun}{cesun@ucsd.edu}
  \icmlcorrespondingauthor{Tsui-Wei Weng}{lweng@ucsd.edu}

  \icmlkeywords{Machine Learning, ICML}

  \vskip 0.3in
]



\printAffiliationsAndNotice{}  

\begin{abstract}
Steering methods influence Large Language Model behavior by identifying semantic directions in hidden representations, and are typically realized through inference-time activation interventions that apply a fixed, global modification to the model’s internal states. While effective, such interventions often induce unfavorable attribute–utility trade-offs under strong control, as they ignore the fact that many behaviors are governed by a small and heterogeneous subset of model components. To alleviate the trade-offs, we propose \textsc{\textbf{Steer2Edit}}, a theoretically grounded, training-free framework that transforms steering vectors from inference-time control signals into diagnostic signals for component-level rank-1 weight editing. Instead of uniformly injecting a steering direction during generation, \textsc{\textbf{Steer2Edit}} selectively redistributes behavioral influence across individual attention heads and MLP neurons, yielding interpretable edits that preserve the standard forward pass and remain compatible with optimized parallel inference. Across multiple tasks including safety alignment, truthfulness promotion, and reasoning efficiency, \textsc{\textbf{Steer2Edit}} consistently achieves more favorable attribute–utility trade-offs: at matched downstream performance, it improves safety by up to 17.2\%, increases truthfulness by 9.8\%, and reduces reasoning length by 12.2\% on average. Overall, \textsc{\textbf{Steer2Edit}} provides a principled bridge between representation steering and weight editing by translating steering signals into \emph{interpretable}, \emph{training-free} parameter updates. Our code is available at: \textsf{{\small \href{https://github.com/Trustworthy-ML-Lab/Steer2Edit}{https://github.com/Trustworthy-ML-Lab/Steer2Edit}}}
\end{abstract}

\doparttoc 
\faketableofcontents 

\input{Introduction}

\input{Preliminary}
\input{Method}
\input{Experiments}
\input{RelatedWorks}
\input{Conclusion}

\clearpage

\section*{Broader Impact}
This paper presents work whose goal is to advance the field of machine learning by enabling efficient, interpretable, and training-free model editing at the level of individual components. In positive applications, \textsc{Steer2Edit} may help practitioners correct or reduce undesirable behaviors (e.g., unsafe responses or hallucinations) and better understand which internal components support a given behavior, which can improve transparency and facilitate auditing. At the same time, weight-editing methods are inherently dual-use: the same capability can be applied to remove safeguards, amplify biases, or otherwise manipulate model behavior for harmful purposes, and edited models may be redistributed without clear provenance. To mitigate these risks, we emphasize that edits should be evaluated across diverse safety and capability tests, that releases should include clear documentation of intended use and limitations, and that responsible access controls may be appropriate for edits that materially alter safety-critical behaviors. Overall, we believe the primary societal consequence of this work is enabling more controllable and inspectable models, with corresponding responsibility to prevent and detect misuse.

\bibliography{example_paper}
\bibliographystyle{icml2026}

\newpage
\appendix
\onecolumn

\makeatletter
\@ifpackageloaded{hyperref}{%
  \def\addcontentsline#1#2#3{%
    \addtocontents{#1}{%
      \protect\contentsline{#2}{#3}{\thepage}{\@currentHref}%
    }%
  }%
}{%
  \def\addcontentsline#1#2#3{%
    \addtocontents{#1}{%
      \protect\contentsline{#2}{#3}{\thepage}{}%
    }%
  }%
}
\makeatother

\addcontentsline{toc}{section}{Appendix} 
\part{} 
\parttoc 

\input{Appendix}

\end{document}

%% file: Introduction.tex
\section{Introduction}

Large Language Models (LLMs) have demonstrated strong capabilities across a wide range of tasks, including multi-step reasoning \cite{deepseek}, code generation \cite{codex}, and planning \cite{reacts}. As these models are increasingly deployed in real-world settings, there is growing interest in \emph{controlling} specific model behaviors without retraining or fully fine-tuning the model.

A prominent line of recent work addresses this goal through \emph{representation steering} \cite{representation_engineering, steering, refusal, reflctrl, unlearning}. These methods identify a \emph{steering vector} in the model’s hidden representation space that correlates with a target attribute, and then intervene at inference time by adding this vector to intermediate activations. Compared to full fine-tuning, steering-based methods offer a lightweight way to adapt a model to different behaviors.

Despite their flexibility, activation-space steering methods suffer from two fundamental limitations. \emph{First}, steering applies a \emph{global} modification to the hidden representation. While such interventions can induce the target behavior, they treat all tokens and internal components uniformly, regardless of how the behavior is realized within the model. Empirical and mechanistic studies show that many behaviors are governed by a small and heterogeneous subset of model components \cite{inductionheads,rome,safetyneurons,safetyheads,drefa}, typically involving specific attention heads or MLP neurons, while most components are only weakly related or unrelated to the target attribute. By ignoring this internal structure, global steering can interfere with unrelated semantic features, resulting in unfavorable trade-offs between the controlled attribute and downstream performance.

\emph{Second}, activation-space steering relies on inference-time modification of intermediate activations. This departs from the standard forward pass assumed by modern optimized inference and training systems, which typically require fixed computation graphs. As a result, activation-level interventions complicate integration with standard deployment, parallel inference, and fine-tuning pipelines. While this limitation can in principle be mitigated with additional system engineering, activation steering remains an inference-time control mechanism whose effects are tied to the decoding process, rather than being encoded in the model parameters.

These limitations motivate a different perspective: instead of treating steering vectors as control signals to be directly injected into the forward pass, can we reinterpret them as \emph{diagnostic signals} that reveal how a target behavior is distributed across model components? If so, can this information be used to selectively modify the components that genuinely govern the behavior, while avoiding unnecessary interference that degrades utility?

To answer these questions, we introduce \textsc{\textbf{Steer2Edit}}, a theoretically grounded framework that converts steering vectors into \emph{component-level weight edits}. In \textsc{\textbf{Steer2Edit}}, a steering vector is treated as a diagnostic signal that reveals which attention heads and MLP neurons align with a target behavior, and to what extent. Guided by this signal, the method applies coordinated rank-1 updates to individual components, selectively amplifying or suppressing their contributions along the steering direction, rather than inducing a global activation shift. By redistributing behavioral influence at the component level, \textsc{\textbf{Steer2Edit}} enables more precise behavioral control and more favorable attribute--utility trade-offs, while yielding interpretable component-level edits. The resulting procedure is closed-form, requires no fine-tuning or iterative optimization, and produces a standalone edited model that operates under the standard forward pass and remains compatible with existing training and optimized parallel inference pipelines.

\paragraph{Contributions.}
\begin{itemize}[leftmargin=1.2em]
\vspace{-10pt}
    \item We propose \textsc{\textbf{Steer2Edit}}, the first theoretically grounded framework that translates steering vectors into component-level rank-1 weight edits, requiring no fine-tuning and admitting a closed-form, single-step solution.
    \vspace{-10pt}
    \item We show that \textsc{\textbf{Steer2Edit}} consistently achieves a superior attribute--performance trade-off compared to activation-level steering across diverse behavioral control settings: when matched for downstream performance, it improves safety by \textbf{17.2\%}, truthfulness by \textbf{9.8\%}, and, in the efficient reasoning setting, reduces reasoning length by \textbf{12.2\%} on average.

    \item We show that \textsc{\textbf{Steer2Edit}} produces a standalone edited model that preserves the original architecture, while offering fine-grained interpretability into which components govern specific behaviors and how these behaviors are distributed across the network.

\end{itemize}

%% file: Preliminary.tex
\section{Preliminary}
\label{sec:steer2edit_preliminaries}

In this section, we fix notation for the Transformer residual stream, define the steering vectors used throughout, and specify the editable weight components used in later \textsc{\textbf{Steer2Edit}} analysis.

\paragraph{Transformer residual-stream updates.}
We consider a pre-normalization Transformer, where the residual stream is updated
at layer $\ell$ according to
\vspace{-5pt}
\[
\begin{aligned}
r^{\text{attn}}_\ell
&= r^{\text{mlp}}_{\ell-1}
+ \delta^{\text{attn}}_\ell,
&\;\;
\delta^{\text{attn}}_\ell
:= \textrm{Attn}\!\bigl(\textrm{LayerNorm}(r^{\text{mlp}}_{\ell-1})\bigr), \\
r^{\text{mlp}}_\ell
&= r^{\text{attn}}_\ell
+ \delta^{\text{mlp}}_\ell,
&\;\;
\delta^{\text{mlp}}_\ell
:= \textrm{MLP}\!\bigl(\textrm{LayerNorm}(r^{\text{attn}}_\ell)\bigr).
\vspace{-5pt}
\end{aligned}
\]
Both $\delta^{\text{attn}}_\ell$ and $\delta^{\text{mlp}}_\ell$ lie in the same
residual-stream space $\mathbb{R}^d$.

\paragraph{Steering vector.}
Steering vectors are commonly used in activation steering, where a semantic direction in hidden representations is added to the residual stream at inference time to control model behavior. Such vectors can be constructed in various ways. For simplicity, we adopt a mean-difference construction in this work, while noting that \textsc{\textbf{Steer2Edit}} is agnostic to how the steering vector is obtained.

Let $\mathcal{X}$ denote a set of prompts. For each prompt $x \in \mathcal{X}$, the model generates a completion $y$, which is classified as exhibiting or not exhibiting the target attribute, yielding $\mathcal{Y}_{\text{pos}}$ and $\mathcal{Y}_{\text{neg}}$.

At token position $t$ of $y$, let
\(
\delta^b_\ell(y,t) \in \mathbb{R}^d\), \(b \in \{\text{attn}, \text{mlp}\},
\)
denote the output of the corresponding block at layer $\ell$ before it is written into the residual stream. Aggregating over token positions $\mathcal{T}_y$ and averaging over generations, we define
\vspace{-5pt}
\[
\overline{\delta}^{b}_{\ell, a}
=
\frac{1}{|\mathcal{Y}_a|}
\sum_{y \in \mathcal{Y}_a}
\frac{1}{|\mathcal{T}_y|}
\sum_{t \in \mathcal{T}_y}
\delta^{b}_{\ell}(y, t),
\qquad
a \in \{\text{pos}, \text{neg}\}.
\vspace{-5pt}
\]

The steering vector at layer $\ell$ and block $b$ is given by the mean difference
\vspace{-5pt}
\begin{equation}
\label{eq:steering_vector}
v^{b}_{\ell}
=
\overline{\delta}^{b}_{\ell,\text{pos}}
-
\overline{\delta}^{b}_{\ell,\text{neg}}
\in \mathbb{R}^{d}.
\vspace{-5pt}
\end{equation}

\begin{figure*}[!t]
    \centering
    \includegraphics[width=0.95\linewidth]{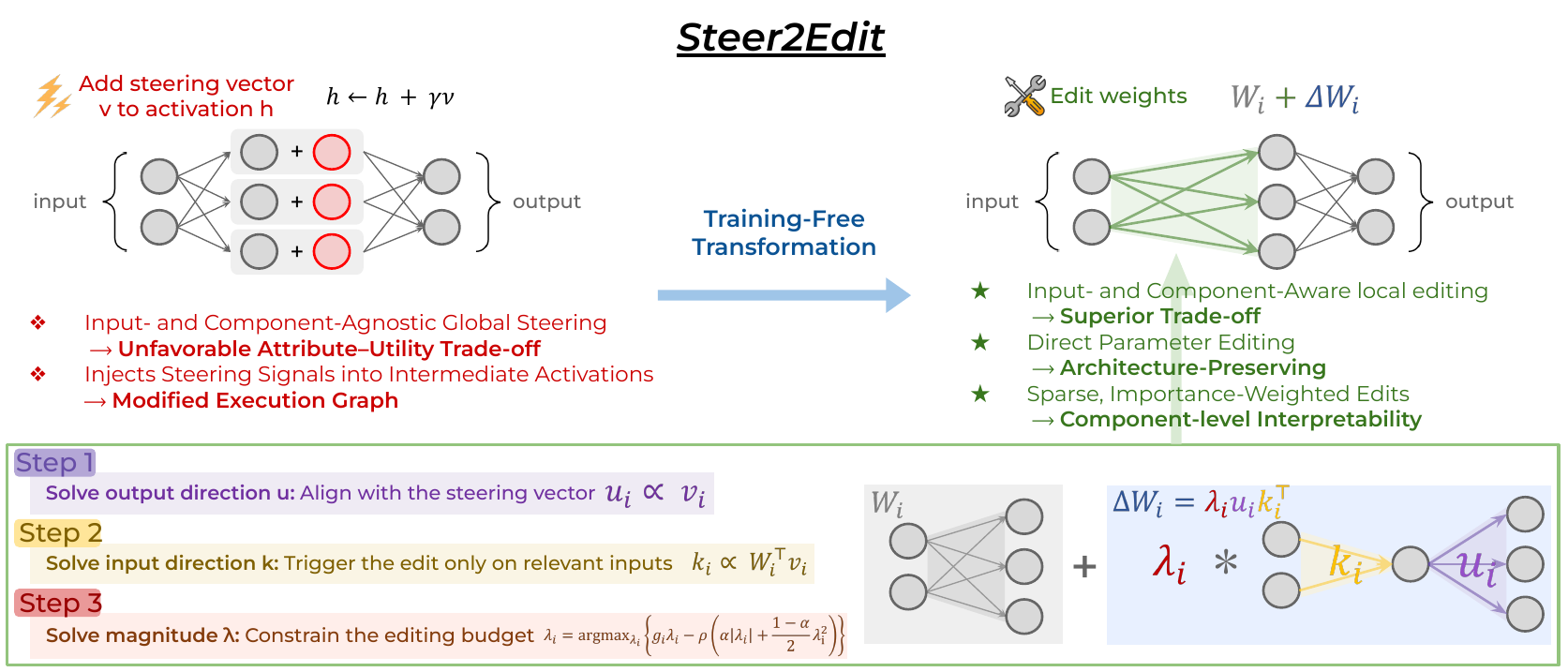}
    \vspace{-5pt}
    \caption{Overview of \textsc{Steer2Edit}. \textsc{Steer2Edit} converts the steering signal into component-level rank-1 weight edits. For each component, the edit \(\Delta W_i = \lambda_i u_i k_i^\top\) is constructed by aligning the output direction \(u_i\) with the steering vector, choosing an input direction \(k_i\) that triggers the edit only on relevant inputs, and allocating the magnitude \(\lambda_i\) under a global budget. The resulting edits are training-free, architecture-preserving, and interpretable.}

    \label{fig:overview}
\end{figure*}

\paragraph{Editable weight components and notation.}
We focus on linear weight components whose outputs produce the block activations
from which the steering vectors in Eq.\eqref{eq:steering_vector} are extracted.
Specifically, for each layer $\ell$ and block type
$b \in \{\text{attn}, \text{mlp}\}$, we consider linear maps whose outputs contribute
to the block output $\delta^{b}_{\ell}$ \emph{before} it is written into the residual
stream.

Concretely, these editable components include:
(i) the output projection ($o\_proj$) of an individual attention head in the
attention block, and
(ii) the down-projection ($down\_proj$) associated with a single neuron in the
MLP block.
We denote any such component generically by
\vspace{-5pt}
\[
W_i \in \mathbb{R}^{d_{\text{out}} \times d_{\text{in}}},
\qquad d_{\text{out}} = d,
\vspace{-5pt}
\]
where the index $i$ implicitly identifies a specific layer $\ell$, block type $b$,
and component within that block.

For an input activation $h_i \in \mathbb{R}^{d_{\text{in}}}$ to component $W_i$,
the component output $W_i h_i$ lies in the same residual-stream space $\mathbb{R}^d$
as the corresponding steering vector $v^{b}_{\ell}$.
Accordingly, in the subsequent analysis we associate each editable component $W_i$
with the steering vector extracted from the same layer and block, and write this
vector simply as $v_i$.

%% file: Method.tex
\section{Steer2Edit}
\label{sec:steer2edit}

In this section, we introduce \textsc{\textbf{Steer2Edit}}, a principled framework for component-level weight editing based on given steering vectors. We parameterize each edit as a rank-1 update and derive its form by decomposing the problem into three parts: (i) identifying the output-space direction that preserves semantic invariance, (ii) the input-space direction that aligns the edit with the component’s intrinsic semantic contribution, and (iii) the scalar magnitude that allocates edit strength under a global regularization budget.

\subsection{Assumption and Setting}

For each editable component \(W_i \in \mathbb{R}^{d_{\text{out}} \times d_{\text{in}}}\),
we assume the existence of a steering vector
\(v_i \in \mathbb{R}^{d_{\text{out}}}\) extracted from the same representation space
into which \(W_i\) writes (e.g., the hidden state after an attention or MLP block).
Thus, the output dimension of \(W_i\) matches that of \(v_i\), and both lie in a
common semantic space.
Note that \textsc{\textbf{Steer2Edit}} is agnostic to how \(v_i\) is obtained.

Our goal is to modify each component \(W_i\) so that the resulting update
\(\Delta W_i\) alters the model’s behavior along the semantic direction represented
by \(v_i\).
Because the steering signal specifies a single direction in representation space,
we model each edit as a rank-1 perturbation, which is the minimal modification that
can inject a directional effect into a linear map.
Accordingly, we parameterize the edit as
\vspace{-5pt}
\[
\Delta W_i = \lambda_i\, u_i k_i^{\top},
\vspace{-5pt}
\]
where \(u_i \in \mathbb{R}^{d_{\text{out}}}\) is an output-space direction,
\(k_i \in \mathbb{R}^{d_{\text{in}}}\) is an input-space direction, and
\(\lambda_i \in \mathbb{R}\) is a scalar magnitude controlling the strength of the edit.

Given an input activation \(h_i\), let
\(
o_i := W_i h_i \in \mathbb{R}^{d_{\text{out}}}
\)
denote the original output of component \(W_i\).
After applying the rank-1 edit \(\Delta W_i = \lambda_i u_i k_i^{\top}\),
the edited output is
\(
\tilde{o}_i
:= (W_i + \Delta W_i)h_i
= o_i + \Delta o_i,
\)
where the induced output shift is
\vspace{-5pt}
\begin{equation}
    \Delta o_i
:= \Delta W_i h_i
= \lambda_i\, u_i (k_i^{\top} h_i).
\label{e:delta o}
\vspace{-5pt}
\end{equation}

The three quantities \((u_i, k_i, \lambda_i)\) play distinct roles in the edit.
The output-space direction \(u_i\) specifies the semantic direction affected by
the edit, the input-space direction $k_i$ determines which inputs activate the
edit through the inner product $k_i^\top h_i$, and the scalar magnitude \(\lambda_i\) controls how strongly each component
is modified.

We derive these quantities in a sequential order.
We first determine \(u_i\), then \(k_i\), and finally solve for
\(\{\lambda_i\}_{i=1}^n\), the per-component edit magnitudes.
As shown in the following sections, this ordering is without loss of generality:
the optimal choice of \(u_i\) depends only on the steering vector \(v_i\);
the choice of \(k_i\) depends on \(u_i\) and local properties of the component
\(W_i\); and once the geometric directions are fixed, the magnitudes
\(\{\lambda_i\}\) can be optimized independently.

Hence, we derive the three components of the edit in the following order:
\vspace{-5pt}
\begin{enumerate}
  \item the output-space direction \(u_i\) in Section \ref{sec: solving output space direction};
  \vspace{-5pt}
  \item the input-space direction \(k_i\) in Section \ref{sec: solving input space direction};
  \vspace{-5pt}
  \item the scalar magnitude \(\lambda_i\) in Section \ref{sec: solving edit magnitudes}.
\end{enumerate}
\vspace{-5pt}

Throughout the following derivations, we identify only the directions of
$u_i$ and $k_i$; their scale and sign are absorbed into the scalar
coefficients $\lambda_i$, which are determined in the last step.

\subsection{Step 1: Solving for the Output-space Direction \(u_i\)}
\label{sec: solving output space direction}

The vector \(u_i\) determines the \emph{direction} of the output shift
\(\Delta o_i\).
Because the steering signal specifies a single semantic direction \(v_i\),
we require that the edit modifies the component’s output \emph{only} along this
direction and introduces no change in any orthogonal subspace.

Formally, recall that
\(
\Delta o_i = \Delta W_i h_i.
\)
Semantic invariance requires that, for any input \(h_i\), the output shift
\(\Delta o_i\) has zero projection onto any direction orthogonal to \(v_i\):
\vspace{-5pt}
\begin{equation}
    z^{\top} \Delta o_i
=
z^{\top} \Delta W_i h_i
= 0
\qquad
\forall\, h_i,\; \forall\, z \perp v_i.
\label{e:output shift}
\vspace{-5pt}
\end{equation}
This constraint directly restricts the output-space direction \(u_i\) of the edit.
The following theorem formalizes this restriction.
\begin{theorem}[Output-space direction under semantic invariance]
\label{thm:ui}
Let \(v_i \neq 0\), and let \(\Delta W_i = \lambda_i\, u_i k_i^{\top}\) be a
rank-1 edit with \(\Delta W_i \neq 0\).
If for all \(h_i\) and all \(z \perp v_i\) we have
\vspace{-5pt}
\[
z^{\top} \Delta W_i h_i = 0,
\vspace{-5pt}
\]
then the output-space direction \(u_i\) must be collinear with \(v_i\), i.e.,
\vspace{-5pt}
\[
u_i \in \operatorname{span}\{v_i\}.
\vspace{-5pt}
\]
\end{theorem}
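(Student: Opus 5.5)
The plan is to reduce the bilinear condition to a statement about $u_i$ alone, using that the rank-1 edit is nonzero. First I would record what $\Delta W_i \neq 0$ gives: since $\Delta W_i = \lambda_i u_i k_i^\top$, we must have $\lambda_i \neq 0$, $u_i \neq 0$, and $k_i \neq 0$. Then, for any $z \perp v_i$ and any $h_i$,
\[
z^\top \Delta W_i h_i = \lambda_i\,(z^\top u_i)\,(k_i^\top h_i).
\]

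Next, I would choose the test input $h_i = k_i$. This gives $k_i^\top h_i = \|k_i\|^2 > 0$. Combined with $\lambda_i \neq 0$, the hypothesis then forces $z^\top u_i = 0$ for every $z \perp v_i$. In other words, $u_i \in (\operatorname{span}\{v_i\}^\perp)^\perp$.

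Finally, I would invoke the finite-dimensional identity $(S^\perp)^\perp = S$ for a subspace $S \subseteq \mathbb{R}^{d_{\text{out}}}$, applied to $S = \operatorname{span}\{v_i\}$. This yields $u_i \in \operatorname{span}\{v_i\}$. If a fully explicit argument is wanted instead of citing the identity, I would decompose
\[
u_i = \frac{v_i^\top u_i}{\|v_i\|^2}\, v_i + u_i^\perp,
\]
which uses $v_i \neq 0$. Taking $z = u_i^\perp$ gives $\|u_i^\perp\|^2 = 0$, so $u_i^\perp = 0$. Because $u_i \neq 0$, the resulting coefficient is nonzero, so $u_i$ is a genuine nonzero multiple of $v_i$, which is what collinearity means here.

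There is no real obstacle. The only point needing care is the first step: using $\Delta W_i \neq 0$ to rule out degenerate factors, namely $\lambda_i = 0$ or $k_i = 0$, which would make the hypothesis vacuous. It is also what guarantees that some input $h_i$ makes $k_i^\top h_i \neq 0$.
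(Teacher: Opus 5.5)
Your proposal is correct and follows essentially the same route as the paper: factor $z^{\top}\Delta W_i h_i = \lambda_i (z^{\top}u_i)(k_i^{\top}h_i)$, use $\Delta W_i \neq 0$ to find an input with $k_i^{\top}h_i \neq 0$, conclude that $u_i$ is orthogonal to every $z \perp v_i$, and hence $u_i \in \operatorname{span}\{v_i\}$. Your version is a bit more explicit than the paper's: you choose $h_i = k_i$ concretely, state that $\lambda_i \neq 0$ (which the paper's proof uses without saying), and give the orthogonal decomposition in place of the paper's bare claim that only multiples of $v_i$ satisfy the condition.
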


We defer the proof to Appendix~\ref{app:proof-ui}. Theorem~\ref{thm:ui} shows that enforcing semantic invariance in Eq.\eqref{e:output shift} uniquely constrains
the output-space direction: any valid rank-1 update must lie entirely
along the steering direction \(v_i\). Importantly, this result is independent of
both the input-space direction \(k_i\) and the scalar magnitude \(\lambda_i\).
This separation justifies solving for \(u_i\) first in our derivation.

We therefore adopt the canonical normalized choice
\vspace{-5pt}
\[
u_i = \hat{v}_i := \frac{v_i}{\|v_i\|_2},
\vspace{-5pt}
\]
with the sign and scale absorbed later into the magnitude \(\lambda_i\).

\subsection{Step 2: Solving for the Input-space Direction \(k_i\)}
\label{sec: solving input space direction}

Having solved the output-space direction \(u_i = \hat{v}_i\) in Section~\ref{sec: solving output space direction}, we now determine
the input-space direction \(k_i\) for each editable component \(W_i\).
As before, we solve for the \emph{direction} of \(k_i\); its sign and scale
are absorbed into the scalar magnitude \(\lambda_i\).

\paragraph{Intuition.}
As shown in Eq.~\eqref{e:delta o}, the input-space direction $k_i$ determines
which input activations $h_i$ trigger the edit through the inner product
$k_i^\top h_i$.
To identify a suitable \(k_i\), we note that a
well-trained component \(W_i\) already encodes which inputs are relevant for
contributing to the semantic direction \(v_i\), and the edit \(\Delta W_i\) should mirror this
existing input-dependent pattern.

To formalize this intuition, we define the \emph{semantic alignment score} of component $W_i$
for an input $h_i$ as
\vspace{-5pt}
\[
s_i(h_i) := v_i^\top o_i = v_i^\top W_i h_i,
\vspace{-5pt}
\]
where $o_i := W_i h_i$, which measures how strongly the original component output
aligns with the target semantic direction $v_i$ for a given input $h_i$.
Intuitively, if \(s_i(h_i)\) is small across inputs, this indicates that the component is generally unrelated to the semantic direction \(v_i\), and the edit should be small for all inputs similarly.
If, for some components, \(s_i(h_i)\) is large for certain inputs, the edit should
be large on those same inputs.

Hence, we choose $k_i$ so that the induced change in the semantic alignment score,
\(
\Delta s_i(h_i) := v_i^\top \Delta W_i h_i,
\)
occurs on the same inputs for which $s_i(h_i)$ is large. To formalize this idea, we maximize the
"absolute" Pearson correlation between \(\Delta s_i(h_i)\) and \(s_i(h_i)\), as we do not care about the sign or overall scale at this stage. The following theorem
provides the solution.

\begin{theorem}[Input-space direction matching semantic alignment variation]
\label{thm:ki}
Fix a component \(W_i\) and set \(u_i=\hat v_i\).
Assume \(W_i^{\top} v_i \neq 0\) and
\(\operatorname{Var}(s_i(h_i)) > 0\), where
\(s_i(h_i) := v_i^{\top} W_i h_i\).
Consider choosing an input-direction \(k_i \neq 0\) so that the induced semantic alignment
shift \(\Delta s_i(h_i) := v_i^{\top} \Delta W_i h_i\) exhibits maximal co-variation with the
component’s intrinsic semantic alignment score \(s_i(h_i)\).
Formally, consider the objective
\vspace{-5pt}
\[
\max_{k_i \neq 0}
\Big|\operatorname{Pearson}\!\big(
\Delta s_i(h_i),\;
s_i(h_i)
\big)\Big|.
\vspace{-5pt}
\]
Then there exists a maximizer \(k_i\) that is collinear with \(W_i^{\top} v_i\), i.e.,
\vspace{-5pt}
\[
k_i \in \operatorname{span}\{W_i^{\top} v_i\}.
\vspace{-5pt}
\]
\end{theorem}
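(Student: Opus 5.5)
The plan is to reduce the objective to the absolute correlation between two linear functionals of the random input $h_i$, and then apply Cauchy--Schwarz in the covariance geometry.

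\textbf{Step 1: simplify the induced shift.} With $u_i=\hat v_i$ and $\Delta W_i=\lambda_i \hat v_i k_i^\top$, we have
\[
\Delta s_i(h_i)=v_i^\top \Delta W_i h_i=\lambda_i\|v_i\|_2\, k_i^\top h_i .
\]
Pearson correlation is invariant in absolute value under multiplication by a nonzero scalar, so for $\lambda_i\neq 0$ the objective becomes
\[
\bigl|\operatorname{Pearson}(k_i^\top h_i,\; w_i^\top h_i)\bigr|,
\qquad w_i:=W_i^\top v_i\neq 0 .
\]
(If $\lambda_i=0$, the edit is trivial and the correlation is undefined, so such choices are excluded.)

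\textbf{Step 2: apply Cauchy--Schwarz.} Let $\Sigma:=\operatorname{Cov}(h_i)$, which is positive semidefinite. Then
\[
\operatorname{Cov}(k^\top h,w^\top h)=k^\top\Sigma w,\qquad
\operatorname{Var}(k^\top h)=k^\top\Sigma k,\qquad
\operatorname{Var}(w^\top h)=w^\top\Sigma w=\operatorname{Var}(s_i)>0 .
\]
The objective is
\[
\frac{|k^\top\Sigma w|}{\sqrt{k^\top\Sigma k}\,\sqrt{w^\top\Sigma w}} ,
\]
defined whenever $k^\top\Sigma k>0$. The bilinear form $\langle a,b\rangle_\Sigma:=a^\top\Sigma b$ is a semi-inner product, and Cauchy--Schwarz holds for it as well. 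Hence $|k^\top\Sigma w|\le\sqrt{k^\top\Sigma k}\sqrt{w^\top\Sigma w}$, so the objective is at most $1$ for every admissible $k$.

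\textbf{Step 3: attainment.} Choosing $k_i=w_i$ gives $k^\top\Sigma k=\operatorname{Var}(s_i)>0$, so $k_i=w_i$ is admissible, and it gives correlation exactly $1$. Therefore $k_i=W_i^\top v_i$, or any nonzero multiple of it, is a maximizer, which proves that a maximizer exists in $\operatorname{span}\{W_i^\top v_i\}$.

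\textbf{Main subtlety.} The theorem claims only existence, and that is all this argument gives. When $\Sigma$ is singular, every $k$ with $\Sigma k\in\operatorname{span}\{\Sigma w\}$ also attains the bound, so the maximizer need not be unique. One must also handle the degenerate $k$ with $\operatorname{Var}(k^\top h)=0$: for these the objective is undefined, and they are excluded from the feasible set. Making these two points explicit is the only delicate step; the rest is routine.
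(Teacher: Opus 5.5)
Your proposal is correct and follows essentially the same route as the paper: write $\Delta s_i$ and $s_i$ as linear functionals of $h_i$, reduce the objective to $|k_i^\top\Sigma_i W_i^\top v_i|/\bigl(\sqrt{k_i^\top\Sigma_i k_i}\,\sqrt{v_i^\top W_i\Sigma_i W_i^\top v_i}\bigr)$, and apply Cauchy--Schwarz in the $\Sigma_i$-inner product, with equality at $k_i\propto W_i^\top v_i$. You are also more careful than the paper on four points: you require $\lambda_i\neq 0$, allow a singular $\Sigma_i$ (a semi-inner product), exclude $k_i$ with $\operatorname{Var}(k_i^\top h_i)=0$, and use $\operatorname{Var}(s_i)>0$ to certify that $k_i=W_i^\top v_i$ is admissible; the paper leaves these implicit and treats $\Sigma_i$ as a genuine inner product.
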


We defer the proof to Appendix~\ref{app:proof-ki}. Theorem~\ref{thm:ki} shows that the input-space direction $k_i$ should align with the
component’s intrinsic input sensitivity $W_i^\top v_i$.
We therefore adopt the normalized choice
\vspace{-5pt}
\[
\hat{k}_i := \frac{W_i^{\top} v_i}{\|W_i^{\top} v_i\|_2}.
\vspace{-5pt}
\]
This choice is further empirically
validated in Appendix~\ref{app:ablation}.

\subsection{Step 3: Solving for the Edit Magnitudes \(\boldsymbol{\lambda}\)}
\label{sec: solving edit magnitudes}

With the edit directions $u_i$ and $k_i$ fixed, we now determine the magnitudes
$\{\lambda_i\}$, which control how strongly each component is reinforced or
suppressed.
Intuitively, the magnitude assigned to each component should reflect how that
component contributes to the direction $v_i$ \emph{on average across inputs}:
components that consistently align with $v_i$ should be reinforced, components
that consistently oppose it should be suppressed, and components with weak alignment should receive little or no edit.

Note that this role is fundamentally different from that of the input-space direction
$k_i$, which captures how the component’s semantic alignment score $s_i(h_i)$ \emph{varies across
inputs}, whereas the magnitudes $\lambda_i$ depend only on the component’s \emph{overall,
input-averaged} semantic alignment.

To formalize this allocation of editing strength, we now introduce an importance
weighting for each component and derive \(\lambda_i\) via a global regularized
optimization.

\paragraph{Importance weighting.}
Recall that the \emph{semantic alignment score}
$s_i(h_i) := v_i^{\top} W_i h_i$
measures how strongly component $W_i$ contributes to the semantic direction $v_i$
for a given input $h_i$.
Since the magnitudes $\lambda_i$ are intended to capture a component’s \emph{overall} semantic contribution, we measure this contribution by the expectation of the semantic alignment score over the input distribution: 
\(\mathbb{E}[s_i(h_i)] = \mathbb{E}[v_i^{\top} W_i h_i]\). However, the typical output magnitude of \(W_i h_i\) can vary substantially across
layers, making raw values not directly comparable between components.
To place components on a common footing, we make the following changes: (i) remove the arbitrary scale of the
semantic direction by using \(\hat v_i = v_i/\|v_i\|_2\), and (ii) normalize by the
output norm of the mean activation \(\mu_i=\mathbb{E}[h_i]\), yielding
\vspace{-5pt}
\[
g_i
=
\frac{\mathbb{E}[s_i(h_i)]}{\|v_i\|_2\,\|W_i \mu_i\|_2}
=
\frac{v_i^\top W_i \mu_i}{\|v_i\|_2\,\|W_i \mu_i\|_2}
=
\cos(v_i, W_i \mu_i),
\vspace{-5pt}
\]
which we refer to as the \emph{component importance score}.
The sign of \(g_i\) indicates whether the component aligns or opposes the semantic
direction, while \(|g_i|\) measures the strength of this tendency.
This normalization choice is empirically validated in
Appendix~\ref{app:ablation}.

\paragraph{Elastic-Net objective.}
The component importance score \(g_i\) indicates whether component \(W_i\) should be
reinforced or suppressed, and with what strength. A natural objective is therefore to maximize total alignment
\(
\boldsymbol{g}^{\top}\boldsymbol{\lambda}
=
\sum_{i=1}^n g_i \lambda_i.
\)
However, this objective is unbounded, and effective weight editing should remain lightweight by modifying only a small number of relevant components while keeping edit magnitudes controlled.

We address both considerations with an Elastic-Net regularization, combining an
\(\ell_1\) term to promote sparsity and an \(\ell_2\) term to limit overall edit
size:
\vspace{-5pt}
\[
\max_{\boldsymbol{\lambda}}
\;
\boldsymbol{g}^{\top} \boldsymbol{\lambda}
-
\rho\!\left(
\alpha\|\boldsymbol{\lambda}\|_1
+
\frac{1-\alpha}{2}\|\boldsymbol{\lambda}\|_2^2
\right),
\vspace{-5pt}
\]
where $\rho > 0$ controls the global edit budget and $\alpha \in [0,1)$ trades off
$\ell_1$ sparsity and $\ell_2$ smoothness.
Ablation results in Appendix~\ref{app:ablation} confirm the importance of both the $\ell_1$ and $\ell_2$ regularization terms.

\begin{theorem}[Edit magnitude allocation under regularization]
\label{thm:lambda}
For each component \(W_i\), let
\(
g_i = \cos(v_i, W_i\mu_i)
\)
denote the component importance score, with the convention that \(g_i := 0\) if
\(W_i\mu_i = 0\).

Consider the problem of assigning edit magnitudes $\{\lambda_i\}$
to maximize total signed alignment as measured by the component importance scores \(\{g_i\}\),
while controlling both the sparsity and overall strength of the edit.
Formally, let \(\boldsymbol{g}=(g_1,\dots,g_n)\) and consider
\vspace{-5pt}
\[
\max_{\boldsymbol{\lambda}\in\mathbb{R}^n}
\;
\boldsymbol{g}^{\top}\boldsymbol{\lambda}
-
\rho\!\left(
\alpha\|\boldsymbol{\lambda}\|_1
+
\frac{1-\alpha}{2}\|\boldsymbol{\lambda}\|_2^2
\right),
 \rho>0,\ \alpha\in[0,1).
\vspace{-5pt}
\]
The unique edit magnitude assigned
to component \(i\) is
\vspace{-5pt}
\[
\lambda_i^*
=
\operatorname{sign}(g_i)\,
\frac{\max(|g_i|-\rho\alpha,\,0)}{\rho(1-\alpha)}.
\vspace{-5pt}
\]
\end{theorem}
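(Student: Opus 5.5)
The objective separates across coordinates, so the proof reduces to a one-dimensional problem. Write
\[
F(\boldsymbol{\lambda})=\sum_{i=1}^n f_i(\lambda_i),
\qquad
f_i(\lambda)=g_i\lambda-\rho\alpha|\lambda|-\tfrac{\rho(1-\alpha)}{2}\lambda^2 .
\]
Since $\rho>0$ and $\alpha<1$, the quadratic coefficient $\rho(1-\alpha)$ is strictly positive. Hence each $f_i$ is strictly concave and coercive, since $f_i(\lambda)\to-\infty$ as $|\lambda|\to\infty$. This gives existence and uniqueness of a maximizer of each $f_i$, and therefore of $F$. The maximizer of $F$ is the vector of coordinatewise maximizers. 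The convention $g_i=0$ when $W_i\mu_i=0$ only matters in that it makes $g_i$ a well-defined real number; nothing else about its value is used.

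For the scalar problem I would use the subgradient optimality condition for a concave function:
\[
0\in g_i-\rho\alpha\,\partial|\lambda|-\rho(1-\alpha)\lambda .
\]
I then split into the cases $\lambda>0$, $\lambda<0$, and $\lambda=0$.
\begin{itemize}
\item If $\lambda>0$, the condition reads $g_i-\rho\alpha-\rho(1-\alpha)\lambda=0$. So $\lambda=(g_i-\rho\alpha)/(\rho(1-\alpha))$, which is consistent with $\lambda>0$ exactly when $g_i>\rho\alpha$.
\item If $\lambda<0$, the symmetric argument gives $\lambda=(g_i+\rho\alpha)/(\rho(1-\alpha))$, consistent exactly when $g_i<-\rho\alpha$.
\item If $\lambda=0$, the condition requires $g_i\in[-\rho\alpha,\rho\alpha]$, because $\partial|\cdot|(0)=[-1,1]$.
\end{itemize}
The three cases partition $\mathbb{R}$ by the value of $g_i$. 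They combine into the soft-thresholding formula
\[
\lambda_i^*=\operatorname{sign}(g_i)\,\frac{\max(|g_i|-\rho\alpha,0)}{\rho(1-\alpha)} .
\]
When $g_i=0$, the formula gives $0$ regardless of the sign convention, which is consistent with the third case.

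No step is a real obstacle. The only point needing care is justifying that the subgradient condition is sufficient as well as necessary, which follows from concavity. Alternatively, one can avoid subgradients by checking directly in each case that the candidate beats any $\lambda$ of the other sign. For example, if $g_i\le\rho\alpha$, then for $\lambda>0$ we have $f_i(\lambda)\le -\tfrac{\rho(1-\alpha)}{2}\lambda^2<0=f_i(0)$. Uniqueness then follows from strict concavity, which needs $\alpha<1$ and is why the statement excludes $\alpha=1$.
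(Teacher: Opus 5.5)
Your proposal is correct and follows essentially the same route as the paper's proof. Both reduce to the one-dimensional objective for each coordinate, apply the subgradient optimality condition, and split into the cases $\lambda_i>0$, $\lambda_i<0$, and $\lambda_i=0$ to obtain the soft-thresholding rule; your explicit separability argument and your use of strict concavity and coercivity (from $\rho>0$, $\alpha<1$) to show the subgradient condition is sufficient and the maximizer unique fill in justifications the paper states without proof.
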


We defer the proof to Appendix~\ref{app:proof-lambda}. Theorem~\ref{thm:lambda} yields a closed-form soft-thresholding rule for allocating
edit magnitudes \(\lambda_i\) according to the alignment scores \(g_i\) under a global
Elastic-Net budget. Substituting \(\lambda_i^*\), together with
\(u_i=\hat v_i\) and \(k_i=\hat k_i\), gives the unified weight-editing update below.

\subsection{Summary: Unified Weight Editing Rule}

Each editable component \(W_i\) receives the rank-1 update
\vspace{-5pt}
\[
\boxed{
\Delta W_i
=
\operatorname{sign}(g_i)\,
\frac{\max(|g_i| - \rho\alpha,\,0)}{\rho(1-\alpha)}
\;\hat{v}_i\, \hat{k}_i^{\top},
}
\vspace{-5pt}
\]
where
\vspace{-5pt}
\[
\hat{v}_i := \frac{v_i}{\|v_i\|_2}, \;\;\;
\hat{k}_i = \frac{W_i^{\top} v_i}{\|W_i^{\top} v_i\|_2}, \;\;\;
g_i = \cos(v_i,\, W_i \mu_i).
\vspace{-5pt}
\]

Each update is:
\vspace{-5pt}
\begin{itemize}[leftmargin=1.2em]
\item \textbf{Directionally selective:} it modifies only the projection along the semantic direction \(v_i\);
\vspace{-5pt}
\item \textbf{Input-selective:} it determines which inputs
should trigger the edit based on the component’s intrinsic semantic behavior;
\vspace{-5pt}
\item \textbf{Budget-aware:} its magnitude \(\lambda_i^*\) is determined by the component importance score \(g_i\) under
an Elastic-Net regularizer.
\end{itemize}
\vspace{-5pt}

Thus, \textsc{\textbf{Steer2Edit}} yields a component-level weight editing framework that
jointly captures \emph{what} semantic direction to modify, \emph{when} the edit should be
activated by the input, and \emph{how strongly} each component should be adjusted.

%% file: Experiments.tex
\section{Experiments}
\label{sec:experiments}

This section evaluates whether \textsc{\textbf{Steer2Edit}} achieves a superior
\emph{attribute--utility trade-off} compared to inference-time activation steering.
We consider three representative behavioral control settings:
(i) safety alignment against jailbreak attacks,
(ii) truthfulness promotion,
and (iii) reasoning efficiency control.
Across all settings, we compare \textsc{\textbf{Steer2Edit}} against the standard activation-steering baseline and report trade-offs between the target attribute and downstream utility.

\subsection{Implementation of Steer2Edit and Baseline}
\label{subsec:implementation}

We describe how the activation-steering and \textsc{\textbf{Steer2Edit}} are
instantiated, and how editing hyperparameters are selected.

\paragraph{Activation Steering (Baseline).}
Given a layer-wise steering vector $v_\ell$, the hidden representation is modified as
$h_\ell \leftarrow h_\ell + \gamma v_\ell$, where $\gamma \ge 0$ denotes the steering strength.
We extract steering vectors separately for attention and MLP blocks at each layer,
$\{v^{\text{attn}}_\ell, v^{\text{mlp}}_\ell\}_{\ell=1}^L$,
and add the vector after each block during inference.
We sweep $\gamma$ to trace attribute--utility trade-off curves.
Task-specific steering vector construction is deferred to
Appendix~\ref{app:steering_vectors}.

\paragraph{Steer2Edit.}
Following Section~\ref{sec:steer2edit}, we apply rank-1 edits to linear components that
write directly into the residual stream.
For attention, we edit each head’s output projection
$W_o \in \mathbb{R}^{d_{\text{model}} \times d_{\text{head}}}$.
For MLPs, we edit individual down-projection neurons by treating each column
$w_{\text{down},j}$ of $W_{\text{down}} \in \mathbb{R}^{d_{\text{model}} \times d_{\text{ff}}}$
as an independent component.

Edit magnitudes are determined by an Elastic-Net objective with sparsity parameter
$\alpha$ and global budgets $\rho_{\text{attn}}$ and $\rho_{\text{mlp}}$ for attention
heads and MLP neurons, respectively.
For each model and behavior setting, hyperparameters
$(\rho_{\text{attn}}, \rho_{\text{mlp}}, \alpha)$
are selected via a small grid search on a held-out validation set,
ranking configurations by the target attribute metric.
Unless otherwise noted, results correspond to the best-performing or top-ranked
configurations.
Full search details are provided in Appendix~\ref{app:hyperparam}.

\subsection{Evaluation for Behavioral Control}
\label{subsec:usecases}

We consider three evaluation settings that examine behavioral control along distinct
dimensions: safety alignment, truthfulness, and reasoning efficiency.
In each setting, the \emph{target attribute} is measured using task-specific metrics,
while \emph{downstream utility} is evaluated on task-oriented benchmarks that
are unrelated to the controlled behavior.

For each use case, we visualize the trade-off between the target attribute and
downstream utility.
A method achieves a superior trade-off if it improves the target attribute while
maintaining higher utility.
Our experiments are designed to test whether \textsc{\textbf{Steer2Edit}} can consistently
achieve such favorable trade-offs relative to activation steering.

\subsubsection{Safety Alignment Against Jailbreak Attacks}
\label{subsec:safety}

\begin{figure}[!t]
    \centering
    \includegraphics[width=\linewidth]{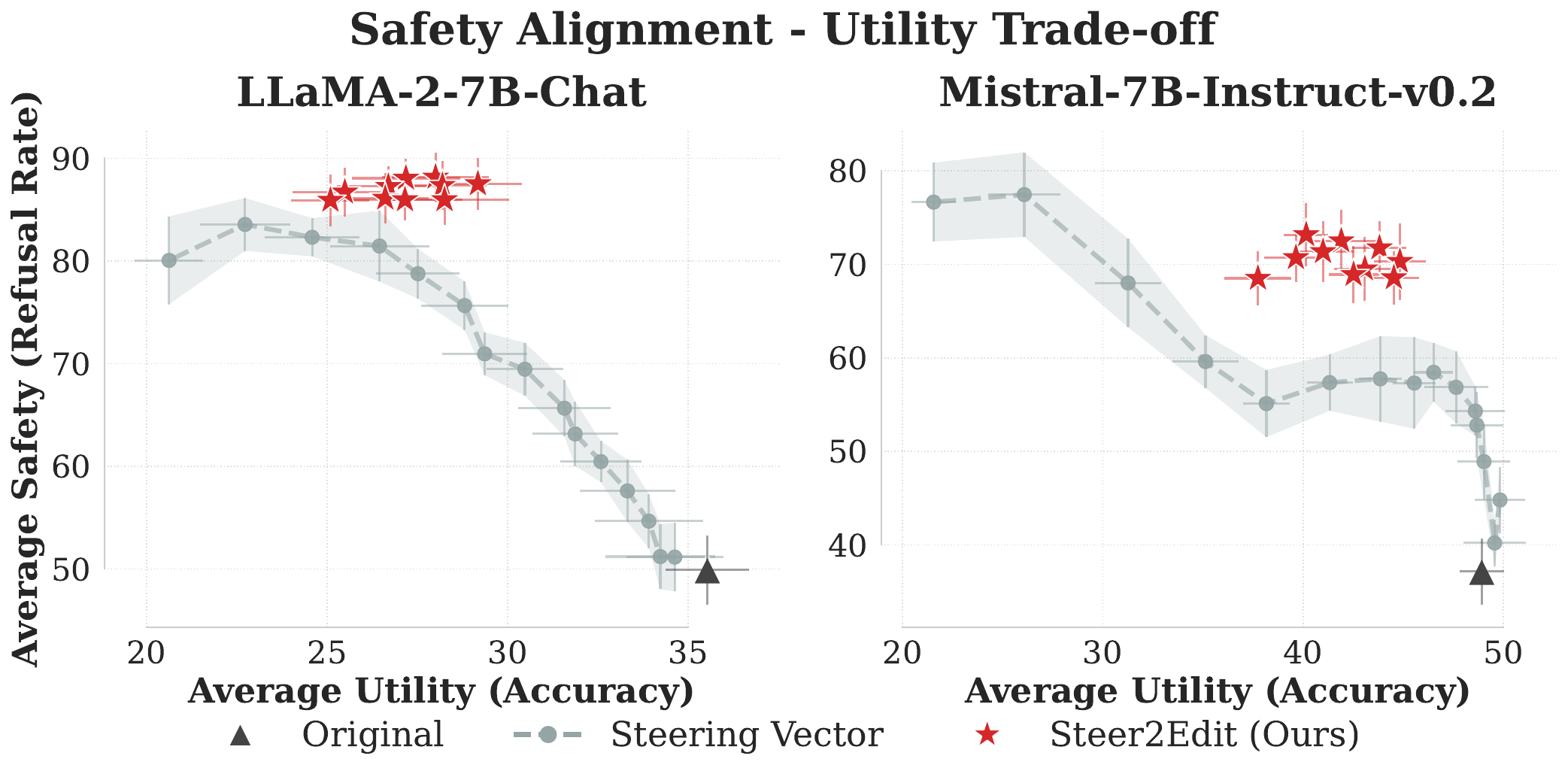}
        \vspace{-20pt}
    \caption{Safety--utility trade-off on \textbf{LLaMA-2-7B-Chat} and
    \textbf{Mistral-7B-Instruct-v0.2}. Each point corresponds to a different
    intervention strength. \textsc{\textbf{Steer2Edit}} consistently attains higher
    refusal rates at comparable or higher utility,
    while strong steering-vector interventions incur substantial utility
    degradation.}
        \vspace{-5pt}
    \label{fig:safety_tradeoff}
\end{figure}

\begin{figure}[!t]
    \centering
    \includegraphics[width=\linewidth]{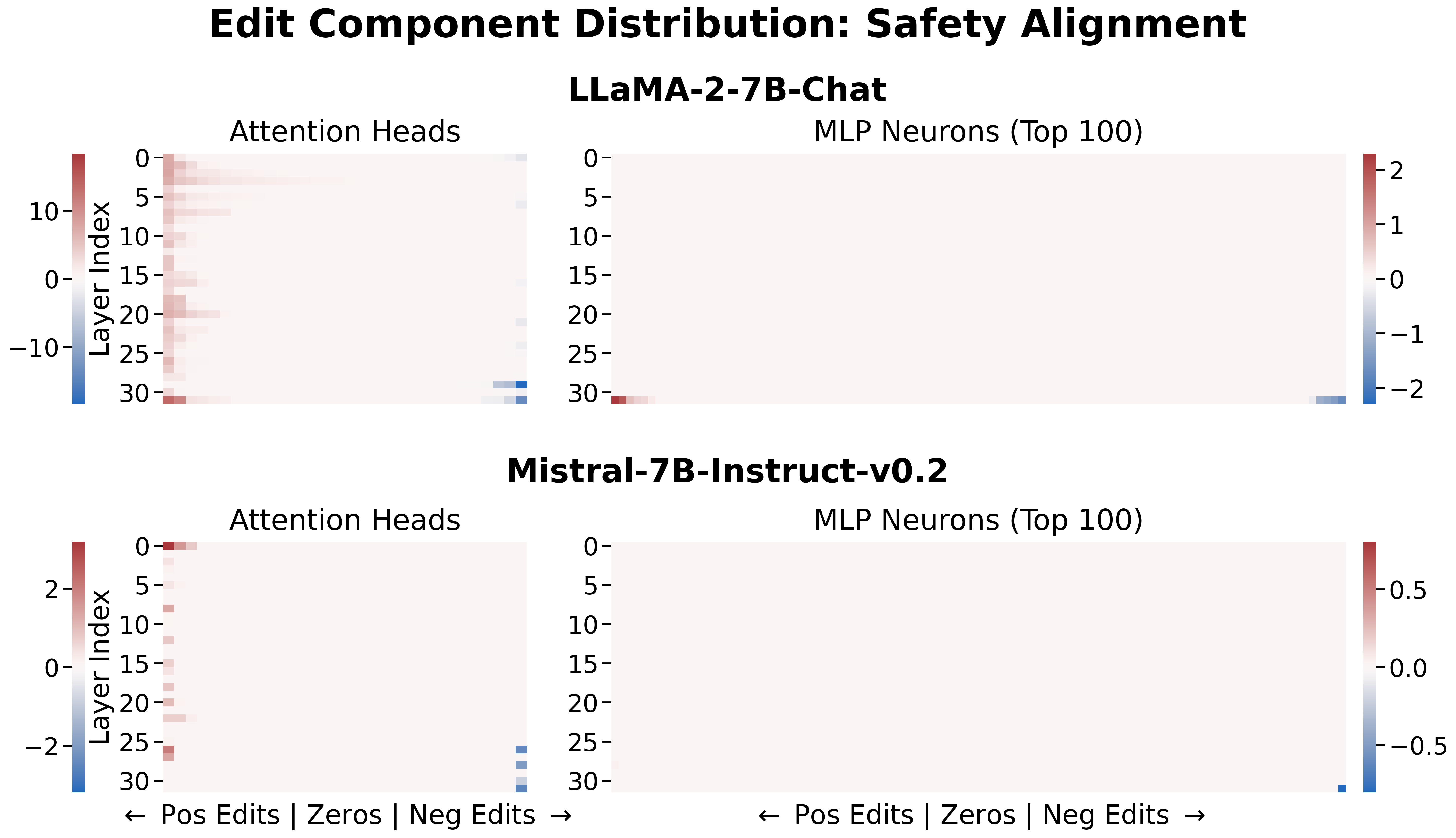}
        \vspace{-15pt}
    \caption{Signed \textsc{\textbf{Steer2Edit}} edit coefficients $\lambda$ for safety alignment. Positive (red) coefficients reinforce safety-aligned components, while negative
    (blue) coefficients suppress safety-opposing ones.
    Edits are highly sparse and concentrated in a small subset of attention heads,
    predominantly in later layers.}
        \vspace{-5pt}
    \label{fig:safety_edit_distribution}
\end{figure}

\paragraph{Goal.}
We evaluate whether \textsc{\textbf{Steer2Edit}} can strengthen refusal behavior under strong
jailbreak attacks while preserving helpfulness on benign tasks.

\paragraph{Models and evaluation.}
We evaluate safety alignment on \textbf{LLaMA-2-7B-Chat} and
\textbf{Mistral-7B-Instruct-v0.2}.
Safety is measured using \textbf{ADVBench}, which consists of harmful user queries
designed to elicit unsafe behavior.
Each query is transformed into a jailbreak prompt using either
\textbf{GCG} \cite{gcg}, a classical gradient-based attack, or \textbf{ADV-LLM} \cite{advllm}, a substantially
stronger attack that trains an LLM to generate adversarial suffixes.
We report the \textbf{refusal rate}, defined as the proportion of model responses that are refusals, averaged across both attack types.

Downstream utility is evaluated on \textbf{GSM8K}, \textbf{CodeMMLU}, and
\textbf{CommonsenseQA}, measuring utility on grade-school math reasoning, programming, and commonsense multiple-choice questions. Utility is reported as the mean accuracy across all three benchmarks.

\paragraph{Results.}
Figure~\ref{fig:safety_tradeoff} illustrates the safety--utility relationship.
Each star corresponds to one of the top-10 most safety-aligned
\textsc{\textbf{Steer2Edit}} configurations.
While steering-vector baselines trace a clear safety--utility trade-off as
intervention strength increases, \textsc{\textbf{Steer2Edit}} identifies configurations
that lie beyond this trade-off frontier, occupying the top-right region of the
plot and achieving higher refusal rates without sacrificing downstream utility.

\paragraph{Component-level analysis.}
Because \textsc{\textbf{Steer2Edit}} applies edits at the level of individual components, the resulting weight updates are directly interpretable and reveal which components
mediate the target behavior.

Figure~\ref{fig:safety_edit_distribution} shows the signed edit coefficients $\lambda$ for the best-performing safety-aligned configuration. Each cell corresponds to the strength of a rank-1 update applied to a specific component: positive values reinforce components aligned with refusal behavior, while negative values suppress components that oppose safety.

Across both models, non-zero $\lambda$ values are highly sparse and concentrated in a small number of attention heads, predominantly in later layers. MLP neurons receive near-zero coefficients with only a few isolated exceptions. These results indicate that effective safety control is achieved through selective amplification and suppression of a small set of attention heads.

\subsubsection{Truthfulness Promotion}
\label{subsec:truthfulness}

\paragraph{Goal.}
We evaluate whether \textsc{\textbf{Steer2Edit}} increases the model’s preference for
truthful answers while preserving
performance on unrelated downstream tasks.

\paragraph{Models and evaluation.}
We evaluate on \textbf{Gemma-2-2B-IT} and \textbf{LLaMA-3-8B-Instruct} using
\textbf{TruthfulQA}.
For each prompt, we measure whether the model assigns higher probability to the truthful
answer than to a plausible but false alternative, and report truthful preference accuracy.
Downstream utility is again measured on GSM8K, CodeMMLU, and CommonsenseQA.

\paragraph{Results.}
Figure~\ref{fig:truthfulness_tradeoff} shows the truthfulness--utility relationship.
While activation steering traces a clear trade-off in which stronger interventions
rapidly degrade utility, \textsc{\textbf{Steer2Edit}} achieves
substantial truthfulness gains without incurring much utility loss.

\paragraph{Component-level analysis.}
Figure~\ref{fig:truth_edit_distribution} visualizes the edit coefficients $\lambda$ of the
best-performing truthfulness-aligned configuration.
Across both models, truthfulness control is sparse and predominantly mediated by
attention heads, with non-zero edits concentrated in a limited number of layers.
In contrast to safety alignment, truthfulness edits are distributed across both early and
late layers.
Notably, in \textbf{Gemma-2-2B-IT}, edits are dominated by negative coefficients,
suggesting that truthfulness gains arise primarily from suppressing
hallucination-promoting components rather than reinforcing truth-aligned ones.
Overall, these patterns indicate that truthfulness can rely on markedly
different internal circuits across models, while remaining amenable to selective,
component-level intervention.

\begin{figure}[!t]
    \centering
    \includegraphics[width=\linewidth]{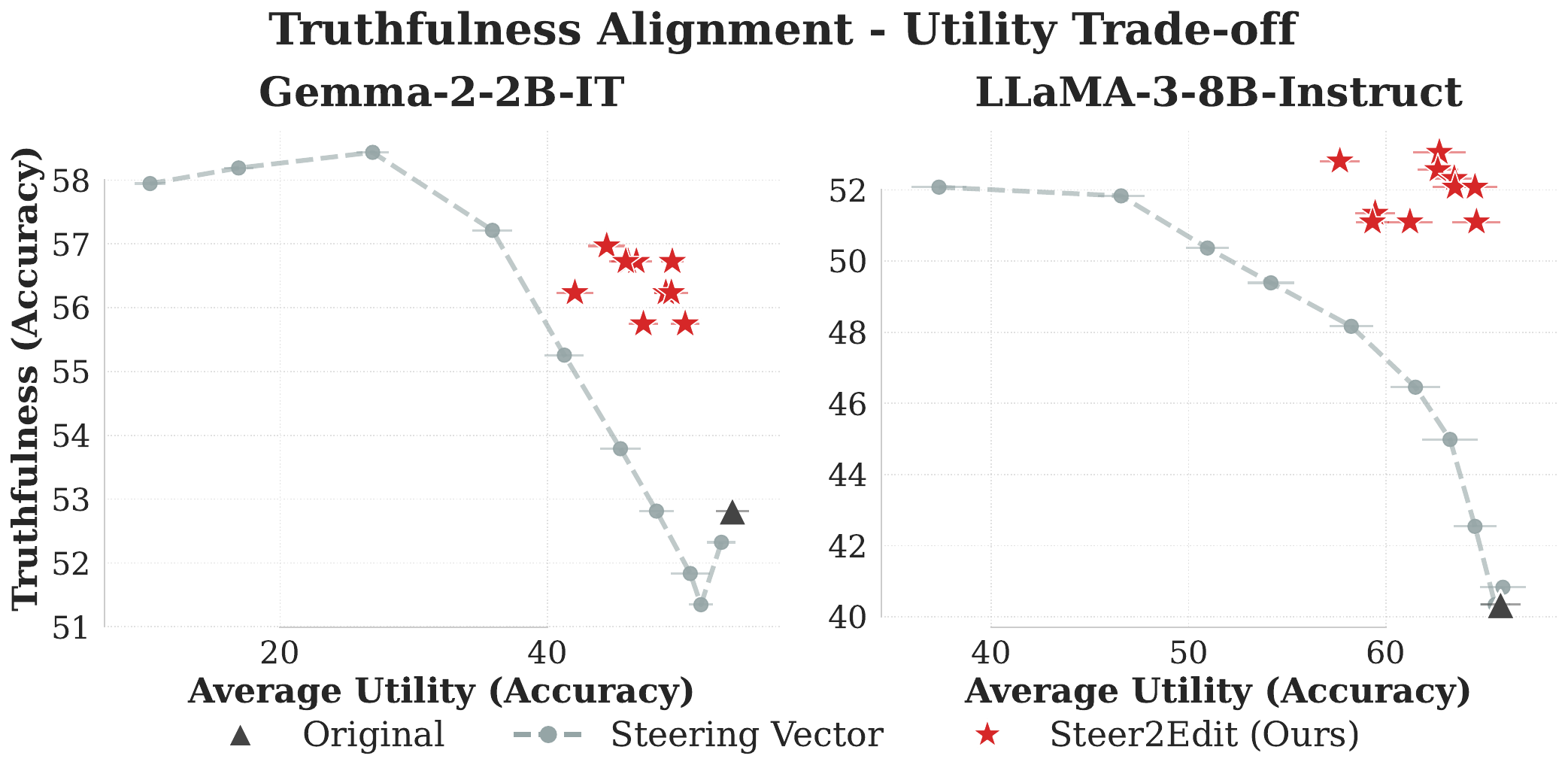}
    \vspace{-20pt}
    \caption{Truthfulness--utility trade-off on \textbf{Gemma-2-2B-IT} and
    \textbf{LLaMA-3-8B-Instruct}. \textsc{\textbf{Steer2Edit}} improves truthfulness at a higher downstream utility than activation steering.}
    \vspace{-5pt}
    \label{fig:truthfulness_tradeoff}
\end{figure}

\begin{figure}[!t]
    \centering
    \includegraphics[width=\linewidth]{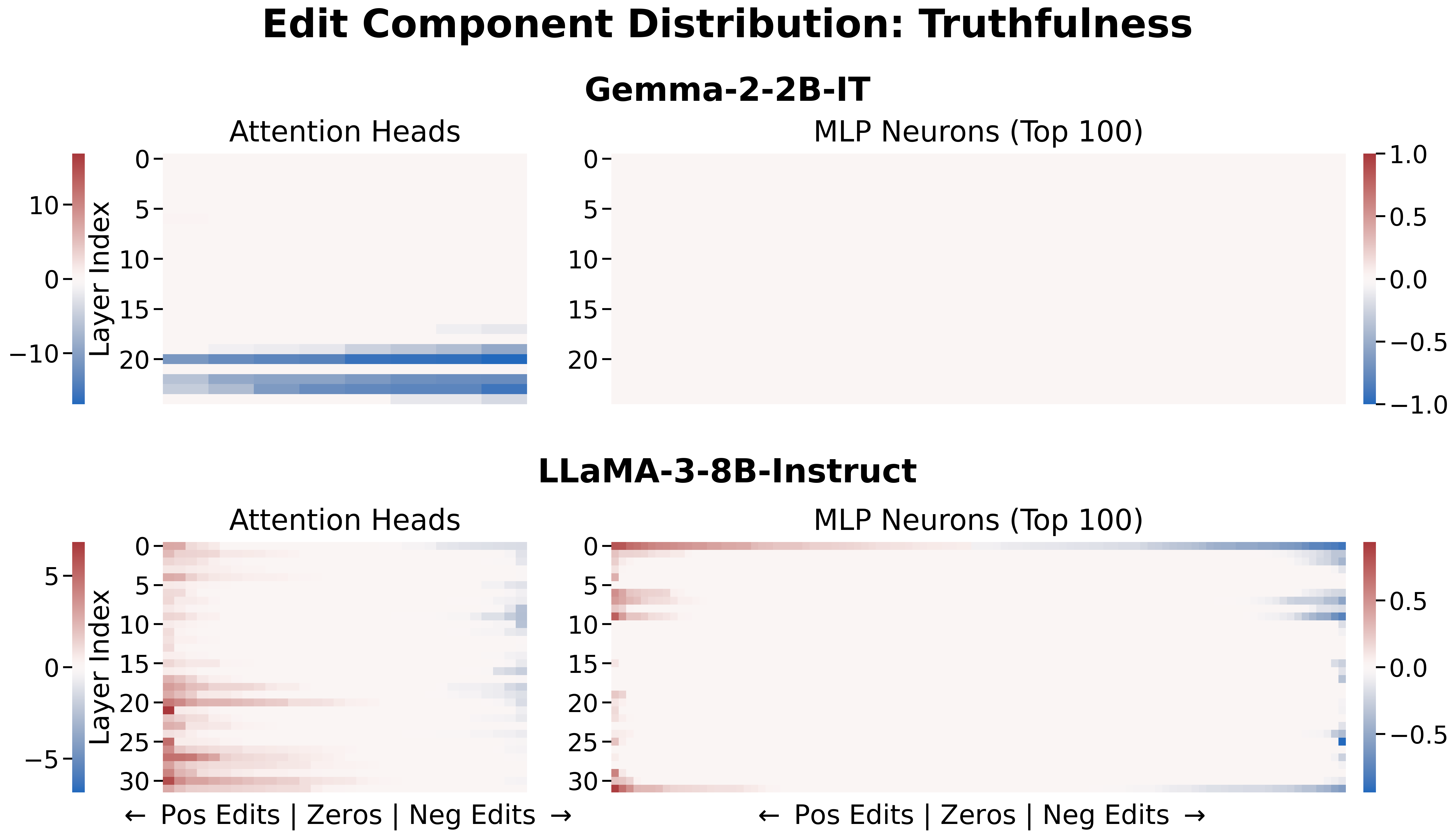}
    \vspace{-15pt}
    \caption{Signed \textsc{\textbf{Steer2Edit}} edit coefficients $\lambda$ for truthfulness
    promotion. Positive values reinforce truthfulness-aligned components, while
    negative values suppress components associated with hallucinated behavior.}
    \vspace{-5pt}
    \label{fig:truth_edit_distribution}
\end{figure}

\subsubsection{Efficient Reasoning}
\label{subsec:reasoning}

\paragraph{Goal.}
We evaluate whether \textsc{\textbf{Steer2Edit}} can shorten reasoning traces while preserving
answer accuracy, improving inference efficiency for Large Reasoning Models (LRMs).

\paragraph{Models and datasets.}
We evaluate on \textbf{Qwen3-4B-Thinking-2507} and \textbf{OpenMath-Nemotron-7B} using
GSM8K, MATH-500, GPQA, and CodeMMLU.
Downstream utility is measured as mean accuracy across all datasets, while reasoning
efficiency is measured by the number of generated reasoning tokens.

\paragraph{Results.}
Figure~\ref{fig:reasoning_tradeoff} shows the accuracy--efficiency relationship.
Across both models, activation steering reduces reasoning length only at the cost of
substantial accuracy degradation.
In contrast, \textsc{\textbf{Steer2Edit}} significantly shortens
reasoning traces while maintaining comparable accuracy.

\paragraph{Component-level analysis.}
Figure~\ref{fig:reasoning_edit_distribution} visualizes the edit coefficients $\lambda$
for the best-performing efficiency-oriented configuration and reveals a qualitatively
different pattern from safety and truthfulness. Reasoning efficiency is predominantly
mediated by MLP components, with dense, distributed edits spanning many neurons, while
attention heads play a comparatively minor role. The most effective configurations
correspond to larger $\rho_{\text{mlp}}$ and smaller $\alpha$, indicating that reducing
reasoning length requires coordinated, distributed modifications to internal computation
rather than sparse interventions on a small set of components. Together, these results
suggest that reasoning efficiency is governed by broad MLP-based computation patterns,
in sharp contrast to the sparse, attention-dominated circuits underlying safety and
truthfulness.

\begin{figure}[!t]
    \centering
    \includegraphics[width=\linewidth]{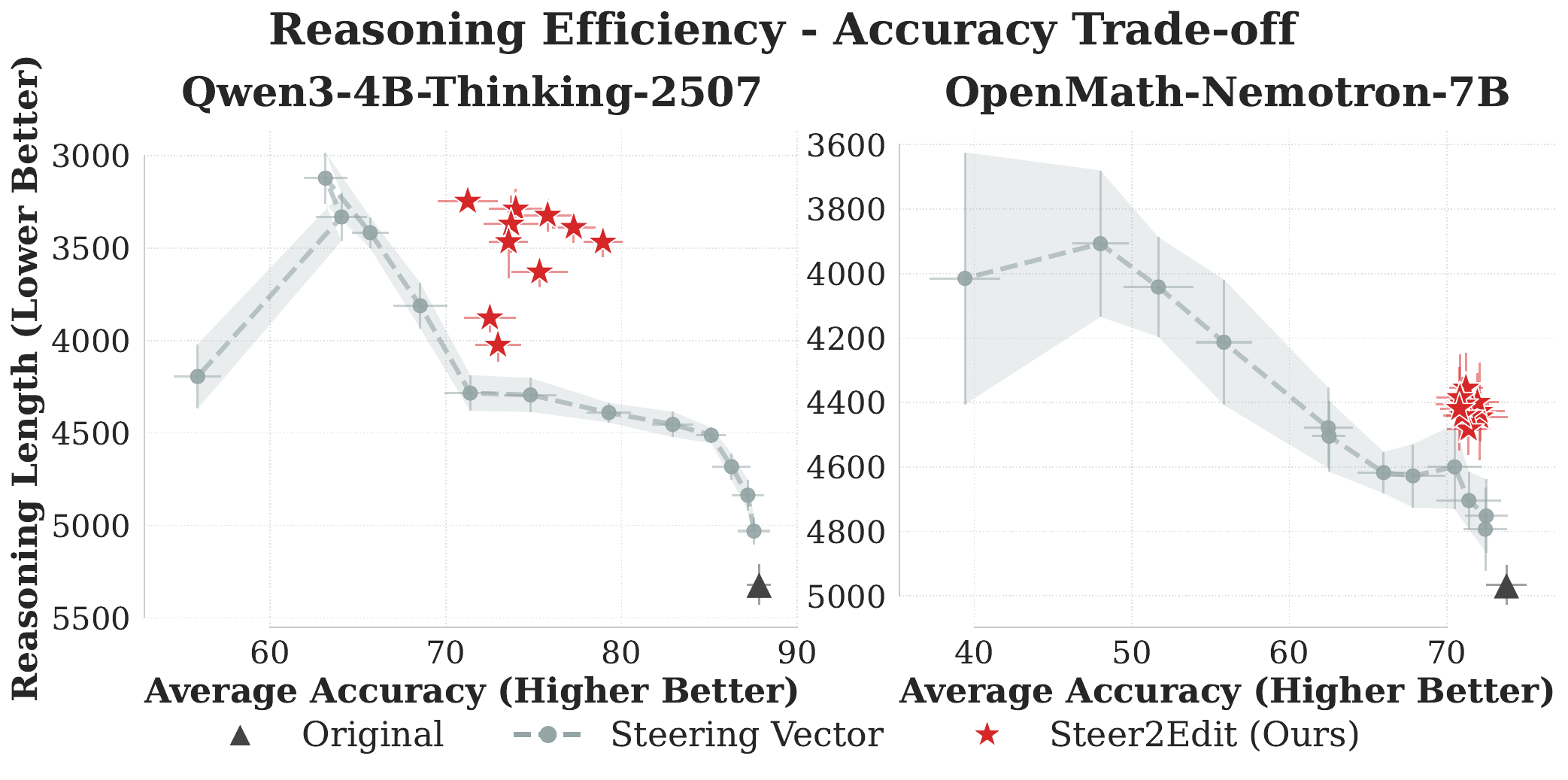}
    \vspace{-20pt}
    \caption{Accuracy--efficiency trade-off on \textbf{Qwen3-4B-Thinking-2507} and
    \textbf{OpenMath-Nemotron-7B}. The y-axis measures reasoning length (lower is
    better). \textsc{\textbf{Steer2Edit}} achieves a more favorable accuracy--efficiency
    trade-off than activation steering.}
        \vspace{-5pt}
    \label{fig:reasoning_tradeoff}
\end{figure}

\begin{figure}[!t]
    \centering
    \includegraphics[width=\linewidth]{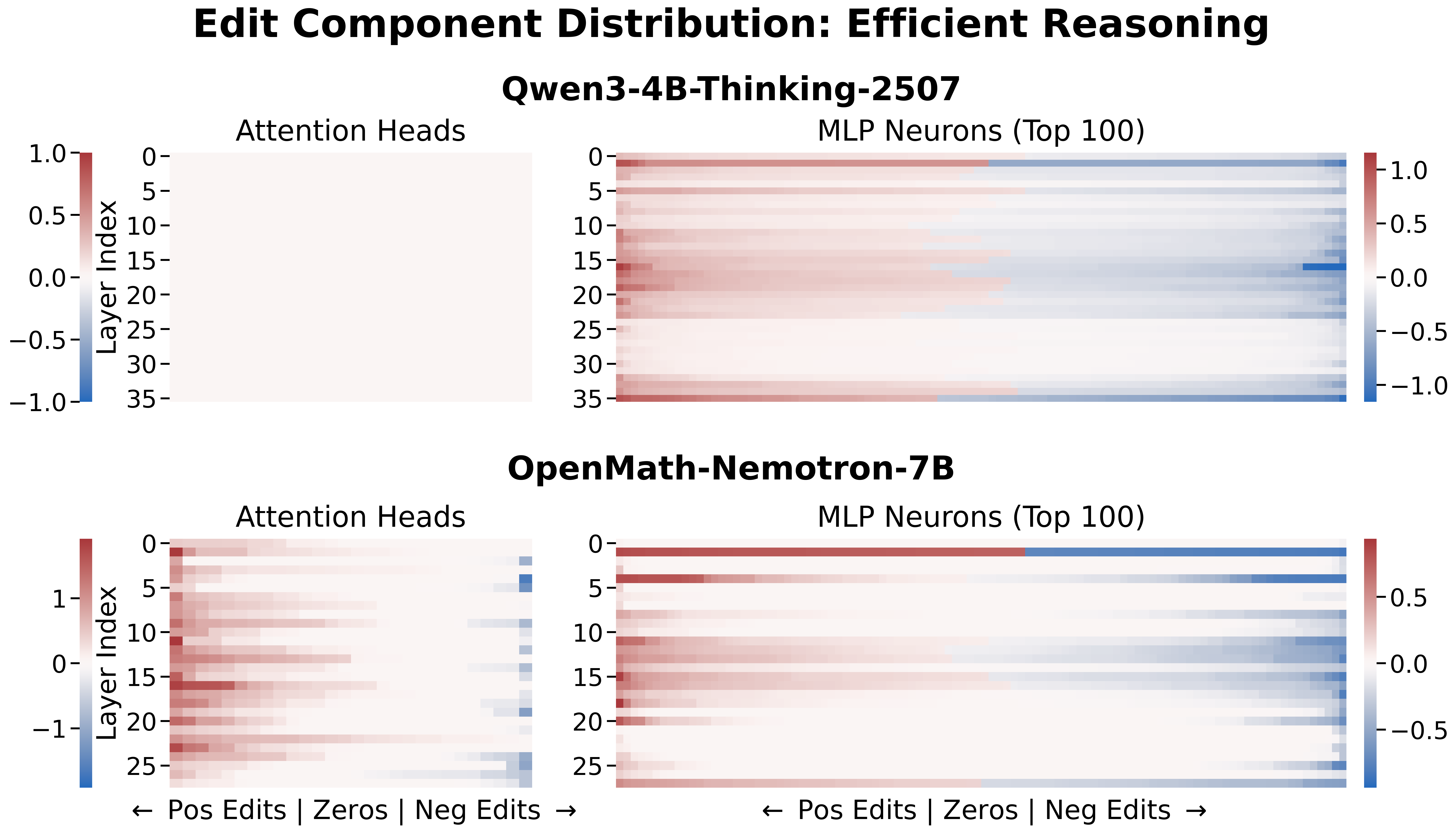}
    \vspace{-15pt}
    \caption{Signed \textsc{\textbf{Steer2Edit}} edit coefficients $\lambda$ for reasoning
    efficiency control. Positive values reinforce components associated with shorter
    reasoning traces, while negative values suppress components that promote longer
    chains of thought.}
        \vspace{-5pt}
        \label{fig:reasoning_edit_distribution}
\end{figure}

\subsection{Additional Experiments.}
For completeness, we report (i) \emph{per-dataset trade-off curves},
(ii) \emph{design-choice ablations over $(u,k,\lambda)$},
(iii) a \emph{component-wise budget sensitivity analysis} that isolates the effects of
$\rho_{\text{attn}}$ and $\rho_{\text{mlp}}$ at fixed $\alpha$,
and (iv) comparisons with \emph{training-based baselines} (full fine-tuning and rank-1 LoRA)
in Appendix~\ref{sec:appendix_tradeoff}, Appendix~\ref{app:ablation},
Appendix~\ref{app:component_budget}, and Appendix~\ref{app:finetune_baselines}.
.

%% file: RelatedWorks.tex
\section{Related Works}

\paragraph{Steering and controlling LLM behavior.}
A growing body of work studies behavioral control in LLMs via interventions on internal representations.
Representation engineering methods \cite{representation_engineering, steering, refusal, reflctrl, unlearning}
extract semantic directions from contrastive examples and apply inference-time activation interventions to modulate
attributes such as safety or reasoning behavior.
These approaches are training-free, but rely on global, inference-time modifications.
In parallel, Concept Bottleneck Models \cite{cbllm, cbllm_classification, conceptlayers, c3m, tbm}
introduce explicit concept variables and architectural constraints to enable structured, human-interpretable control.
Together, these lines of work demonstrate that LLM behavior can be influenced through manipulation of internal representations.

In contrast, \textsc{\textbf{Steer2Edit}} translates steering signals into
\emph{component-level} edits that operate at the level of individual components.
By redistributing behavioral influence across attention heads and MLP neurons,
\textsc{\textbf{Steer2Edit}} enables more favorable attribute--utility trade-offs,
while providing fine-grained interpretability and preserving the standard model architecture.

\paragraph{LLM weight editing.}
Another line of work focuses on modifying model parameters to induce persistent behavioral changes without full retraining.
Representative approaches include meta-editors such as MEND \cite{mend} and KnowledgeEditor \cite{knowledgeeditor},
mechanistic editors such as ROME \cite{rome} and MEMIT \cite{memit},
neuron-level interventions \cite{knowledgeneurons},
and semi-parametric methods such as SERAC \cite{serac}.
More recent work applies targeted weight edits to specific components for behavior control,
including ThinkEdit \cite{thinkedit} for mitigating overly short reasoning traces and DRefA \cite{drefa} for safety. These methods are largely empirical and do not provide a unified framework for allocating and justifying edits across components.

\textsc{\textbf{Steer2Edit}} complements this literature by providing a general, theoretically grounded,
and training-free framework that systematically converts steering directions into component-wise weight updates.

%% file: Conclusion.tex
\section{Conclusion}
We introduced \textsc{\textbf{Steer2Edit}}, a principled framework that translates steering signals into component-level weight edits via a closed-form solution. By shifting behavioral control from inference-time activation intervention to parameter updates, \textsc{\textbf{Steer2Edit}} achieves more favorable behavior--utility trade-offs while preserving the standard model architecture. Beyond empirical gains, the method offers fine-grained interpretability, revealing how safety, truthfulness, and reasoning efficiency are distributed across attention heads and MLP components. These results show that steering vectors can serve as effective diagnostic signals for systematic weight editing, providing a practical and theoretically grounded alternative to activation-level control.

%% file: Appendix.tex
\appendix
\section{Proofs for \textsc{\textbf{Steer2Edit}}}
\label{app:steer2edit_proofs}

\subsection{Proof of Theorem~\ref{thm:ui}}
\label{app:proof-ui}
\begin{theorem}[Output-space direction under semantic invariance]
\label{thm:ui_app}
Let \(v_i \neq 0\), and let \(\Delta W_i = \lambda_i\, u_i k_i^{\top}\) be a
rank-1 edit with \(\Delta W_i \neq 0\).
If for all \(h_i\) and all \(z \perp v_i\) we have
\[
z^{\top} \Delta W_i h_i = 0,
\]
then the output-space direction \(u_i\) must be collinear with \(v_i\), i.e.,
\[
u_i \in \operatorname{span}\{v_i\}.
\]
\end{theorem}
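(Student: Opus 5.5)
The plan is to reduce the condition to a statement about the single vector \(u_i\), and then use that the orthogonal complement of \(v_i\) is the only obstruction to collinearity.

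First, since \(\Delta W_i = \lambda_i u_i k_i^{\top} \neq 0\), we have \(\lambda_i \neq 0\), \(u_i \neq 0\), and \(k_i \neq 0\). For any \(z \perp v_i\), the hypothesis gives
\[
0 = z^{\top}\Delta W_i h_i = \lambda_i\,(z^{\top}u_i)\,(k_i^{\top}h_i) \qquad \text{for all } h_i .
\]
Choose the particular input \(h_i = k_i\). Then \(k_i^{\top}h_i = \|k_i\|_2^2 > 0\). Dividing by the nonzero scalar \(\lambda_i\|k_i\|_2^2\) yields \(z^{\top}u_i = 0\) for every \(z \perp v_i\). In other words, \(u_i \in (\{v_i\}^{\perp})^{\perp}\).

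Second, we identify this double complement. Decompose \(u_i = c\,\hat v_i + w\), where \(c = \hat v_i^{\top}u_i\) and \(w := u_i - c\,\hat v_i\). By construction \(w \perp v_i\), and here \(v_i \neq 0\) is what makes \(\hat v_i\) well defined. Taking \(z = w\) in the previous step gives
\[
0 = w^{\top}u_i = c\,w^{\top}\hat v_i + \|w\|_2^2 = \|w\|_2^2 ,
\]
so \(w = 0\). Hence \(u_i = c\,\hat v_i \in \operatorname{span}\{v_i\}\), and \(c \neq 0\) because \(u_i \neq 0\).

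No step is a real obstacle. The only points needing care are these:
\begin{itemize}
\item justifying that a suitable test input exists, which the choice \(h_i = k_i\) handles because the hypothesis quantifies over all \(h_i \in \mathbb{R}^{d_{\text{in}}}\);
\item noting that \(\Delta W_i \neq 0\) rules out the degenerate factors \(\lambda_i = 0\), \(u_i = 0\), or \(k_i = 0\).
\end{itemize}
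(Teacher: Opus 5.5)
Your proof is correct and follows the paper's own argument. In both, you factor $z^{\top}\Delta W_i h_i = \lambda_i (z^{\top}u_i)(k_i^{\top}h_i)$, use $\Delta W_i \neq 0$ to get an input with $k_i^{\top}h_i \neq 0$, and conclude $z^{\top}u_i = 0$ for all $z \perp v_i$. The only difference is that you make explicit two steps the paper merely asserts: the concrete witness $h_i = k_i$, and the orthogonal-decomposition argument showing $(\{v_i\}^{\perp})^{\perp} = \operatorname{span}\{v_i\}$.
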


\begin{proof}
Substituting \(\Delta W_i = \lambda_i\, u_i k_i^{\top}\),
\[
z^{\top} \Delta W_i h_i
=
\lambda_i\,(z^{\top} u_i)\,(k_i^{\top} h_i).
\]
Because \(\Delta W_i \neq 0\), we have \(k_i \neq 0\), and therefore there exists
some \(h_i\) such that \(k_i^{\top} h_i \neq 0\).
For the expression above to vanish for all such \(h_i\), we must have
\(z^{\top} u_i = 0\) for every vector \(z\) orthogonal to \(v_i\).
The only vectors satisfying this condition are those proportional to \(v_i\).
Hence \(u_i \in \operatorname{span}\{v_i\}\).
\end{proof}

\subsection{Proof of Theorem~\ref{thm:ki}}
\label{app:proof-ki}
\begin{theorem}[Input-space direction matching semantic alignment variation]
\label{thm:ki_app}
Fix a component \(W_i\) and set \(u_i=\hat v_i\).
Assume \(W_i^{\top} v_i \neq 0\) and
\(\operatorname{Var}(s_i(h_i)) > 0\), where
\(s_i(h_i) := v_i^{\top} W_i h_i\).
Consider choosing an input-direction \(k_i \neq 0\) so that the induced semantic alignment
shift \(\Delta s_i(h_i) := v_i^{\top} \Delta W_i h_i\) exhibits maximal co-variation with the
component’s intrinsic semantic alignment score \(s_i(h_i)\).
Formally, consider the objective
\[
\max_{k_i \neq 0}
\Big|\operatorname{Pearson}\!\big(
\Delta s_i(h_i),\;
s_i(h_i)
\big)\Big|.
\]
Then there exists a maximizer \(k_i\) that is collinear with \(W_i^{\top} v_i\), i.e.,
\[
k_i \in \operatorname{span}\{W_i^{\top} v_i\}.
\]
\end{theorem}

\begin{proof}
Recall that
\(s_i(h_i) := v_i^{\top} W_i h_i\) and
\(\Delta s_i(h_i) := v_i^{\top} \Delta W_i h_i.
\)
Using \(\Delta W_i = \lambda_i \hat v_i k_i^{\top}\), we have
\[
\Delta s_i(h_i)
= \lambda_i \|v_i\|_2 \,(k_i^{\top} h_i).
\]

Pearson correlation is invariant to additive shifts in either argument, so it is
unchanged if we center the inputs.
Let \(\tilde{h}_i = h_i - \mu_i\) with \(\mu_i = \mathbb{E}[h_i]\), and define
\[
\widetilde{\Delta s}_i(h_i) := \Delta s_i(\tilde{h}_i),
\qquad
\tilde{s}_i(h_i) := s_i(\tilde{h}_i).
\]
Denote the covariance matrix by
\[
\Sigma_i := \mathbb{E}[\tilde{h}_i \tilde{h}_i^{\top}].
\]

\paragraph{Step 1: covariance.}
We have
\[
\widetilde{\Delta s}_i(h_i)
= \lambda_i\|v_i\|_2 \,(k_i^{\top}\tilde{h}_i),
\qquad
\tilde{s}_i(h_i)
= v_i^{\top}W_i \tilde{h}_i.
\]
Hence
\begin{align*}
\operatorname{Cov}(\widetilde{\Delta s}_i,\tilde{s}_i)
&= \mathbb{E}[\widetilde{\Delta s}_i(h_i)\,\tilde{s}_i(h_i)] \\
&= \lambda_i\|v_i\|_2 \,\mathbb{E}[(k_i^{\top}\tilde{h}_i)(v_i^{\top}W_i \tilde{h}_i)] \\
&= \lambda_i\|v_i\|_2 \,k_i^{\top}\,\mathbb{E}[\tilde{h}_i \tilde{h}_i^{\top}]\,W_i^{\top} v_i \\
&= \lambda_i\|v_i\|_2 \,k_i^{\top}\Sigma_i W_i^{\top} v_i.
\end{align*}

\paragraph{Step 2: variances.}
Similarly,
\[
\operatorname{Var}(\widetilde{\Delta s}_i)
= \mathbb{E}[\widetilde{\Delta s}_i(h_i)^2]
= \lambda_i^2\|v_i\|_2^2\,k_i^{\top}\Sigma_i k_i,
\]
and
\[
\operatorname{Var}(\tilde{s}_i)
= \mathbb{E}[\tilde{s}_i(h_i)^2]
= v_i^{\top}W_i\Sigma_i W_i^{\top}v_i.
\]

\paragraph{Step 3: Pearson correlation.}
The Pearson correlation between the induced and intrinsic semantic signals is
\[
\operatorname{Pearson}(\widetilde{\Delta s}_i,\tilde{s}_i)
=
\frac{\operatorname{Cov}(\widetilde{\Delta s}_i,\tilde{s}_i)}
     {\sqrt{\operatorname{Var}(\widetilde{\Delta s}_i)}\,
      \sqrt{\operatorname{Var}(\tilde{s}_i)}}.
\]
Substituting the expressions above gives
\begin{align*}
\operatorname{Pearson}(\widetilde{\Delta s}_i,\tilde{s}_i)
&=
\frac{\lambda_i\|v_i\|_2 \,k_i^{\top}\Sigma_i W_i^{\top} v_i}
     {\sqrt{\lambda_i^2\|v_i\|_2^2\,k_i^{\top}\Sigma_i k_i}\,
      \sqrt{v_i^{\top}W_i\Sigma_i W_i^{\top}v_i}} \\[4pt]
&=
\frac{\operatorname{sign}(\lambda_i)\,k_i^{\top}\Sigma_i W_i^{\top} v_i}
     {\sqrt{k_i^{\top}\Sigma_i k_i}\,
      \sqrt{v_i^{\top}W_i\Sigma_i W_i^{\top}v_i}}.
\end{align*}
The denominator’s second factor is independent of \(k_i\), and
\(\operatorname{sign}(\lambda_i)\) is irrelevant when maximizing absolute
correlation.
Thus maximizing
\(|\operatorname{Pearson}(\widetilde{\Delta s}_i,\tilde{s}_i)|\)
over \(k_i\neq 0\) reduces to maximizing
\[
\frac{|k_i^{\top}\Sigma_i W_i^{\top} v_i|}
     {\sqrt{k_i^{\top}\Sigma_i k_i}}.
\tag{1}
\]

\paragraph{Step 4: Cauchy--Schwarz in the \(\Sigma_i\)-inner product.}
Define the \(\Sigma_i\)-inner product:
\[
\langle a,b\rangle_{\Sigma_i} := a^{\top}\Sigma_i b,
\qquad
\|a\|_{\Sigma_i} := \sqrt{a^{\top}\Sigma_i a}.
\]
Then (1) becomes
\[
\frac{|\langle k_i, W_i^{\top} v_i\rangle_{\Sigma_i}|}{\|k_i\|_{\Sigma_i}}.
\]
By Cauchy--Schwarz,
\[
|\langle k_i, W_i^{\top} v_i\rangle_{\Sigma_i}|
\le
\|k_i\|_{\Sigma_i}\,\|W_i^{\top} v_i\|_{\Sigma_i}.
\]
Equality is attained by choosing
\(k_i \propto W_i^{\top} v_i\).
Therefore, there exists an optimizer
\(k_i \in \operatorname{span}\{W_i^{\top} v_i\}\).
\end{proof}

\subsection{Proof of Theorem~\ref{thm:lambda}}
\label{app:proof-lambda}
\begin{theorem}[Edit magnitude allocation under regularization]
\label{thm:lambda_app}
For each component \(W_i\), let
\(
g_i = \cos(v_i, W_i\mu_i)
\)
denote the component importance score, with the convention that \(g_i := 0\) if
\(W_i\mu_i = 0\).

Consider the problem of assigning edit magnitudes $\{\lambda_i\}$
to maximize total signed alignment as measured by the component importance scores \(\{g_i\}\),
while controlling both the sparsity and overall strength of the edit.
Formally, let \(\boldsymbol{g}=(g_1,\dots,g_n)\) and consider
\[
\max_{\boldsymbol{\lambda}\in\mathbb{R}^n}
\;
\boldsymbol{g}^{\top}\boldsymbol{\lambda}
-
\rho\!\left(
\alpha\|\boldsymbol{\lambda}\|_1
+
\frac{1-\alpha}{2}\|\boldsymbol{\lambda}\|_2^2
\right),
 \rho>0,\ \alpha\in[0,1).
\]
The unique edit magnitude assigned
to component \(i\) is
\[
\lambda_i^*
=
\operatorname{sign}(g_i)\,
\frac{\max(|g_i|-\rho\alpha,\,0)}{\rho(1-\alpha)}.
\]
\end{theorem}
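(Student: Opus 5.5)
The objective separates across coordinates, so I will reduce the problem to $n$ independent scalar problems and solve each one by hand. Write
\[
F(\boldsymbol\lambda)=\sum_{i=1}^n f_i(\lambda_i),\qquad f_i(\lambda)=g_i\lambda-\rho\alpha|\lambda|-\tfrac{\rho(1-\alpha)}{2}\lambda^2 .
\]
Since $\rho>0$ and $\alpha<1$, the quadratic coefficient $\rho(1-\alpha)$ is strictly positive. Each $f_i$ is therefore strictly concave: a strictly concave quadratic plus the concave term $-\rho\alpha|\lambda|$. It is also coercive, since $f_i(\lambda)\to-\infty$ as $|\lambda|\to\infty$. Hence each $f_i$ has exactly one maximizer, and $F$, being a sum of functions of disjoint variables, is maximized uniquely by the vector of these coordinatewise maximizers. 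This gives existence and uniqueness. It remains to identify $\lambda_i^*$.

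For the scalar problem I will use the subdifferential optimality condition. Because $f_i$ is concave, $\lambda$ is the maximizer if and only if
\[
0\in g_i-\rho\alpha\,\partial|\lambda|-\rho(1-\alpha)\lambda ,
\]
where $\partial|\lambda|=\{\operatorname{sign}(\lambda)\}$ for $\lambda\neq0$ and $\partial|0|=[-1,1]$. I will split into cases.

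\textbf{Case $\lambda>0$.} The condition reads $\lambda=(g_i-\rho\alpha)/(\rho(1-\alpha))$, which is consistent only when $g_i>\rho\alpha$.

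\textbf{Case $\lambda<0$.} Symmetrically, $\lambda=(g_i+\rho\alpha)/(\rho(1-\alpha))$, which is consistent only when $g_i<-\rho\alpha$.

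\textbf{Case $\lambda=0$.} The condition requires $g_i\in[-\rho\alpha,\rho\alpha]$, that is, $|g_i|\le\rho\alpha$.

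These three regions partition $\mathbb{R}$. In every region the candidate equals $\operatorname{sign}(g_i)\max(|g_i|-\rho\alpha,0)/(\rho(1-\alpha))$. This matches the claimed formula, including the convention $g_i=0$ when $W_i\mu_i=0$, because that case falls into the zero region.

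The only delicate point is the nonsmooth $\ell_1$ term at $\lambda=0$. I would handle it with the subgradient characterization. An alternative that avoids subgradients is to compare $f_i$ directly on the two half-lines, each of which is a smooth concave quadratic, and check that the boundary value $0$ wins exactly when $|g_i|\le\rho\alpha$.

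Finally, I note that $\alpha=0$ is allowed. In that case the formula reduces to $\lambda_i^*=g_i/\rho$, and the argument above still covers it, since strict concavity comes from the $\ell_2$ term alone.
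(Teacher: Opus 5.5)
Your proof is correct and follows essentially the same route as the paper: reduce to the per-coordinate objective $J(\lambda_i)$ and solve it through the subgradient condition, split into the three cases $\lambda_i>0$, $\lambda_i<0$, $\lambda_i=0$. The only difference is that you state explicitly what the paper uses implicitly, namely that the objective separates across coordinates and that strict concavity (with coercivity) makes the subgradient condition sufficient and the maximizer unique.
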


\begin{proof}
For component \(i\), define the one-dimensional objective
\[
J(\lambda_i)
=
g_i \lambda_i
-
\rho\!\left(
\alpha |\lambda_i|
+
\frac{1-\alpha}{2}\lambda_i^2
\right).
\]
A scalar value \(\lambda_i^*\) maximizes \(J\) iff
\[
0 \in \partial J(\lambda_i^*),
\]
where the subgradient is needed only at \(\lambda_i = 0\) due to the
nondifferentiability of \(|\lambda_i|\). We analyze the three regions
\(\lambda_i>0\), \(\lambda_i<0\), and \(\lambda_i=0\).

\medskip
\noindent
\textbf{Case 1: \(\lambda_i > 0\).}
Here \(|\lambda_i|=\lambda_i\), so
\[
J(\lambda_i)
=
g_i \lambda_i
-
\rho\left(
\alpha\lambda_i
+
\frac{1-\alpha}{2}\lambda_i^2
\right).
\]
Differentiating gives
\[
\frac{dJ}{d\lambda_i}
=
g_i - \rho\alpha - \rho(1-\alpha)\lambda_i.
\]
Setting this to zero yields
\[
\lambda_i
=
\frac{g_i - \rho\alpha}{\rho(1-\alpha)},
\]
which is valid only when the positivity assumption holds, i.e.\ \(g_i > \rho\alpha\).

\medskip
\noindent
\textbf{Case 2: \(\lambda_i < 0\).}
Here \(|\lambda_i| = -\lambda_i\), so
\[
J(\lambda_i)
=
g_i \lambda_i
+
\rho\alpha \lambda_i
-
\frac{\rho(1-\alpha)}{2}\lambda_i^2.
\]
Differentiating,
\[
\frac{dJ}{d\lambda_i}
=
g_i + \rho\alpha - \rho(1-\alpha)\lambda_i.
\]
Setting this to zero gives
\[
\lambda_i
=
\frac{g_i + \rho\alpha}{\rho(1-\alpha)},
\]
which is valid only when \(g_i < -\rho\alpha\).

\medskip
\noindent
\textbf{Case 3: \(\lambda_i = 0\).}
At zero, the subdifferential of \(|\lambda_i|\) is \(\partial|\lambda_i|=[-1,1]\).  
The optimality condition
\[
0 \in g_i - \rho\alpha\, s, \qquad s\in[-1,1],
\]
is feasible iff \(|g_i|\le \rho\alpha\).  
Thus \(\lambda_i^* = 0\) whenever the alignment is too small to exceed the
threshold.

\medskip
Combining all three cases gives the soft-threshold rule
\[
\lambda_i^*
=
\begin{cases}
\displaystyle
\frac{g_i - \rho\alpha}{\rho(1-\alpha)}, & g_i > \rho\alpha,\\[6pt]
0, & |g_i| \le \rho\alpha,\\[6pt]
\displaystyle
\frac{g_i + \rho\alpha}{\rho(1-\alpha)}, & g_i < -\rho\alpha.
\end{cases}
\]
Equivalently,
\[
\lambda_i^*
=
\operatorname{sign}(g_i)\,
\frac{\max(|g_i| - \rho\alpha,\; 0)}{\rho(1-\alpha)}.
\]
\end{proof}

\newpage

\section{Steering Vector Construction}
\label{app:steering_vectors}

This appendix describes the procedures used to construct steering vectors for each
behavioral control setting.
Across all experiments, steering vectors are computed using the same general
mean-difference formulation and are shared verbatim between activation steering and
\textsc{\textbf{Steer2Edit}}.
Only the definition of positive and negative response sets differs by task.

\paragraph{General formulation.}
For a given model, layer $\ell$, and block type (attention or MLP), we collect the
block output activations for a set of responses.
Let $\mathcal{P}$ and $\mathcal{N}$ denote the positive and negative response sets
associated with a target behavior.
For each response, we average the block outputs over all response tokens.
The steering vector is then defined as the difference between the mean activations:
\[
v_\ell = \mathbb{E}_{x \in \mathcal{P}}[h_\ell(x)] - \mathbb{E}_{x \in \mathcal{N}}[h_\ell(x)] .
\]
This procedure is applied independently to the attention and MLP blocks at each layer,
yielding $\{v^{\text{attn}}_\ell, v^{\text{mlp}}_\ell\}_{\ell=1}^L$.

\paragraph{Safety alignment.}
For safety alignment, we construct steering vectors using the ADVBench dataset.
The positive set $\mathcal{P}$ consists of refusal responses to harmful prompts,
while the negative set $\mathcal{N}$ consists of standard helpful responses to benign questions sampled from
Alpaca dataset. Steering vectors are computed from model-generated responses.

\paragraph{Truthfulness.}
For truthfulness promotion, we use the TruthfulQA dataset.
We split the dataset into a probing set and an evaluation set.
Model responses on the probing set are labeled as \emph{truthful} or \emph{hallucinated} using
\textbf{QwQ-32B} as an external judge.
The positive set $\mathcal{P}$ consists of truthful responses, and the negative set
$\mathcal{N}$ consists of hallucinated responses.

\paragraph{efficient Reasoning.}
For reasoning efficiency control, we use the GSM8K training set. We measure the length of each model-generated reasoning trace and select the top 5\% shortest and top 5\% longest responses. The positive set $\mathcal{P}$ consists of short reasoning traces, and the negative set
$\mathcal{N}$ consists of long reasoning traces. The resulting steering vectors capture directions associated with shorter internal reasoning processes.

All steering vectors are computed once per model and per behavioral control setting.
During evaluation, the same steering vectors are applied across all test sets, reflecting a practical deployment scenario in which vectors are not optimized for any specific evaluation benchmark.

\newpage

\section{Hyperparameter Search Procedure}
\label{app:hyperparam}

\textsc{\textbf{Steer2Edit}} introduces three scalar hyperparameters:
the attention editing budget $\rho_{\text{attn}}$, the MLP editing budget $\rho_{\text{mlp}}$,
and the Elastic-Net sparsity parameter $\alpha$.
These parameters control how edit magnitude is allocated across model components
and how sparsely edits are distributed.
Hyperparameters are selected using a lightweight two-stage grid search on held-out data.

\subsection*{Step 1: Coarse grid search}

We first perform a coarse-grained grid search over a shared range
that is identical across all models and behavioral control settings:
\[
\rho_{\text{attn}} \in \{0.1, 0.3, 0.5, 0.7, 0.9\}, \quad
\rho_{\text{mlp}} \in \{0.1, 0.3, 0.5, 0.7, 0.9\}, \quad
\alpha \in \{0.1, 0.3, 0.5, 0.7, 0.9\}.
\]
The goal of this step is to identify the approximate operating regime
(e.g., attention-dominated versus MLP-dominated edits,
sparse versus distributed allocation),
rather than to finely optimize performance.

Each configuration is first subjected to a lightweight sanity check
using 20 short, simple prompts.
If the edited model exhibits degenerate behavior
(e.g., repetitive output, failure to respond, or nonsensical generations),
the configuration is immediately discarded.
This allows unstable settings to be filtered at negligible cost.

\subsection*{Step 2: Refined grid search}

Based on the results of the coarse search, we define a refined but still small
grid for each (model, setting) pair.
The refined grids narrow the range and reduce the step size around regions
that exhibit meaningful improvements in the target attribute
while preserving normal model behavior.

In several settings, we observe that edits to either attention or MLP components
have negligible impact on the target behavior.
In these cases, the corresponding component is not edited at all,
and no budget is assigned to that component during the refined search.

\subsection*{Final search ranges}

Table~\ref{tab:hyperparam_grid} summarizes the refined hyperparameter ranges
used in each behavioral control setting.
All reported results in the main paper, including the best-performing configuration
and the top-10 configurations shown in trade-off plots, are selected
exclusively from these ranges.

\begin{table}[H]
\centering
\setlength{\tabcolsep}{4pt}
\begin{tabular*}{\linewidth}{@{\extracolsep{\fill}} l l l l}
\toprule
\textbf{Setting / Model} & $\boldsymbol{\rho_{\text{attn}}}$ & $\boldsymbol{\rho_{\text{mlp}}}$ & $\boldsymbol{\alpha}$ \\
\midrule
\multicolumn{4}{l}{\emph{Safety Alignment}} \\
LLaMA-2-7B-Chat
& $[0.16,\,0.24]$ (step = 0.02)
& $[0.35,\,0.55]$ (step = 0.05)
& $[0.70,\,0.90]$ (step = 0.05) \\
Mistral-7B-Instruct-v0.2
& $[0.42,\,0.50]$ (step = 0.02)
& $[0.40,\,0.60]$ (step = 0.05)
& $[0.65,\,0.85]$ (step = 0.05) \\
\midrule
\multicolumn{4}{l}{\emph{Truthfulness}} \\
Gemma-2-2B-IT
& $[0.30,\,0.50]$ (step = 0.05)
& \textit{negligible}
& $[0.75,\,0.95]$ (step = 0.05) \\
LLaMA-3-8B-Instruct
& $[0.10,\,0.14]$ (step = 0.01)
& $[0.30,\,0.50]$ (step = 0.05)
& $[0.30,\,0.70]$ (step = 0.10) \\
\midrule
\multicolumn{4}{l}{\emph{Efficient Reasoning}} \\
Qwen3-4B-Thinking-2507
& \textit{negligible}
& $[0.65,\,0.80]$ (step = 0.05)
& $[0.05,\,0.20]$ (step = 0.05) \\
OpenMath-Nemotron-7B
& $[0.20,\,0.30]$ (step = 0.05)
& $[0.80,\,0.90]$ (step = 0.05)
& $[0.10,\,0.20]$ (step = 0.05) \\
\bottomrule
\end{tabular*}
\caption{Refined hyperparameter search ranges for \textsc{\textbf{Steer2Edit}}.
``Negligible'' indicates that edits to the corresponding component
were found to have insufficient effect during coarse search
and are therefore not applied in the refined search.}
\vspace{-10pt}
\label{tab:hyperparam_grid}
\end{table}

\subsection*{Efficiency and reporting}

Hyperparameter search for \textsc{\textbf{Steer2Edit}} is computationally lightweight.
Each configuration requires only a single closed-form application
of rank-1 weight edits, followed by evaluation on a held-out small validation set.
In practice, the full two-stage search completes within minutes per model, and does not involve gradient-based optimization. All evaluations are performed on held-out data that is disjoint
from steering vector extraction.
No additional tuning is performed on test sets.

\newpage

\section{Per-Dataset Trade-off Analysis}
\label{sec:appendix_tradeoff}

In Section \ref{subsec:usecases}, we summarize each behavioral control setting
using aggregated downstream utility metrics to provide a concise comparison across
methods.
In this appendix, we present \emph{per-dataset trade-off curves} that expose finer-grained
behavior across individual evaluation benchmarks.
These results demonstrate that the superior attribute--utility trade-offs achieved by
\textsc{\textbf{Steer2Edit}} are consistent across datasets.

\subsection{Safety Alignment: Attack- and Dataset-Specific Trade-offs}
\label{sec:appendix_safety}

For safety alignment, we evaluate two jailbreak attack methods (GCG and ADV-LLM)
and three downstream utility benchmarks (CommonsenseQA, Code-MMLU, and GSM8K),
resulting in six distinct safety--utility trade-off settings per model.

Figure~\ref{fig:appendix_safety_per_dataset} reports refusal rate versus downstream
utility separately for each attack--dataset pair on \textbf{LLaMA-2-7B-Chat} and
\textbf{Mistral-7B-Instruct-v0.2}.
Across most settings, \textsc{\textbf{Steer2Edit}} identifies configurations that achieve higher
refusal rates at comparable or higher utility than inference-time activation steering.

For \textbf{LLaMA-2-7B-Chat}, \textsc{\textbf{Steer2Edit}} consistently dominates the
steering baseline under both GCG and ADV-LLM attacks across all downstream datasets.
For \textbf{Mistral-7B-Instruct-v0.2}, performance depends on the attack strength:
under the weaker GCG attack, \textsc{\textbf{Steer2Edit}} is occasionally slightly worse than
activation steering at comparable utility, whereas under the substantially stronger
ADV-LLM attack, \textsc{\textbf{Steer2Edit}} achieves markedly higher refusal rates while
preserving downstream accuracy.

Notably, the advantage of \textsc{\textbf{Steer2Edit}} becomes more pronounced as the attack
strength increases.
While activation steering requires aggressive intervention that sharply degrades
utility under ADV-LLM, weight-level edits derived by \textsc{\textbf{Steer2Edit}} maintain
stable benign-task performance while substantially improving robustness to strong
jailbreaks.

\subsection{Truthfulness: Dataset-Specific Utility Trade-offs}
\label{sec:appendix_truthfulness}

For truthfulness promotion, downstream utility is evaluated independently on
CommonsenseQA, Code-MMLU, and GSM8K.
Figure~\ref{fig:appendix_truth_per_dataset} shows truthfulness versus utility
accuracy for \textbf{Gemma-2-2B-IT} and \textbf{LLaMA-3-8B-Instruct} on each benchmark.

Across all datasets, activation steering exhibits a pronounced trade-off in which
increasing truthfulness rapidly degrades task performance.
In contrast, \textsc{\textbf{Steer2Edit}} consistently attains higher truthfulness at
substantially higher utility, with edited configurations occupying regions of the
trade-off space that is unattainable by steering alone.

These per-dataset results demonstrate that the truthfulness gains of
\textsc{\textbf{Steer2Edit}} generalizes across reasoning-heavy and knowledge-oriented
benchmarks.

\subsection{Efficient Reasoning: Dataset-Level Accuracy--Length Trade-offs}
\label{sec:appendix_efficiency}

For efficient reasoning control, we report dataset-specific trade-offs between answer
accuracy and reasoning length on GSM8K, MATH-500, GPQA, and Code-MMLU for
\textbf{Qwen3-4B-Thinking-2507} and \textbf{OpenMath-Nemotron-7B}.

Figure~\ref{fig:appendix_efficiency_per_dataset} shows that activation steering reduces
reasoning length primarily by sacrificing accuracy, with the severity of this trade-off
varying substantially across datasets.
In contrast, \textsc{\textbf{Steer2Edit}} consistently identifies configurations that shorten
reasoning traces while preserving accuracy, including on challenging benchmarks such
as GPQA and MATH-500.

Notably, these improvements generalize beyond GSM8K, despite the steering direction
being extracted from GSM8K, indicating that \textsc{\textbf{Steer2Edit}} captures a transferable
mechanism for reasoning efficiency control.

\begin{figure}[H]
    \centering
    \includegraphics[width=\linewidth]{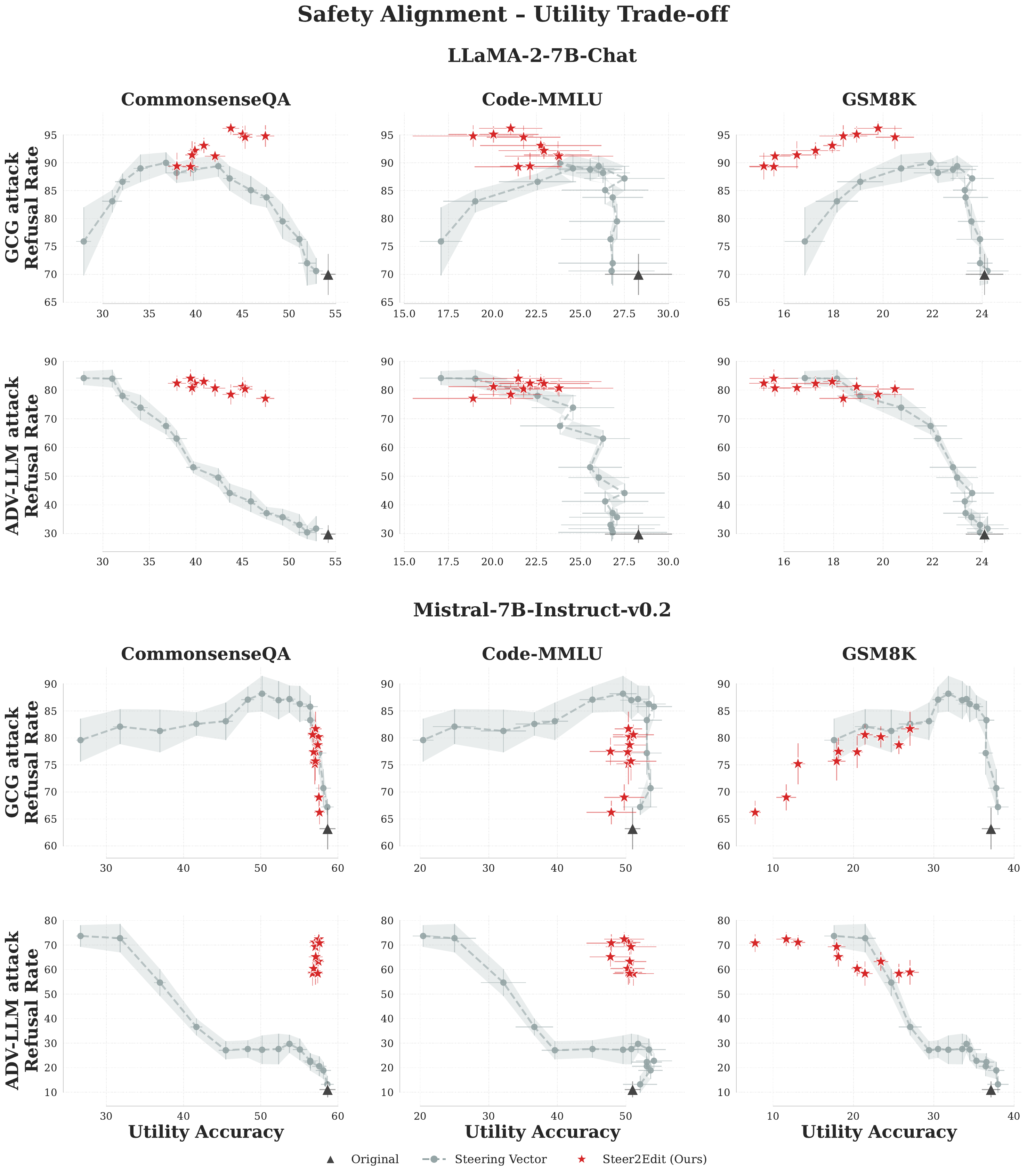}
    \caption{
    Per-dataset safety--utility trade-offs under GCG and ADV-LLM attacks.
    Each column corresponds to a downstream utility dataset
    (CommonsenseQA, Code-MMLU, GSM8K), and each row corresponds to an attack method.
    \textsc{\textbf{Steer2Edit}} consistently achieves higher refusal rates at comparable or
    higher utility than activation steering across all settings.
    }
    \label{fig:appendix_safety_per_dataset}
\end{figure}

\begin{figure}[H]
    \centering
    \includegraphics[width=\linewidth]{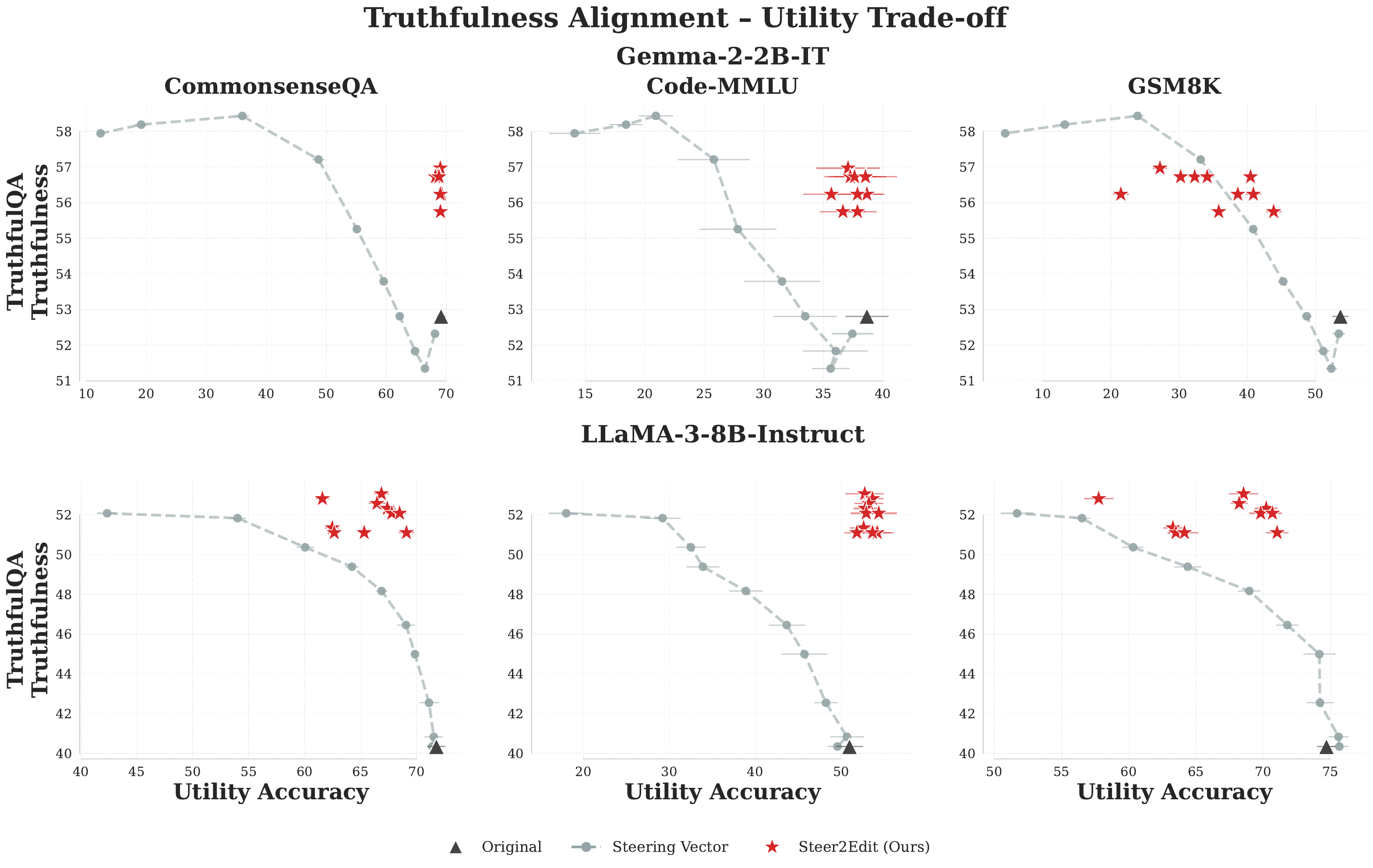}
    \caption{
    Per-dataset truthfulness--utility trade-offs on CommonsenseQA, Code-MMLU,
    and GSM8K for Gemma-2-2B-IT and LLaMA-3-8B-Instruct.
    \textsc{\textbf{Steer2Edit}} improves truthfulness while preserving higher downstream
    utility across all datasets.
    }
    \label{fig:appendix_truth_per_dataset}
\end{figure}

\begin{figure}[H]
    \centering
    \includegraphics[width=\linewidth]{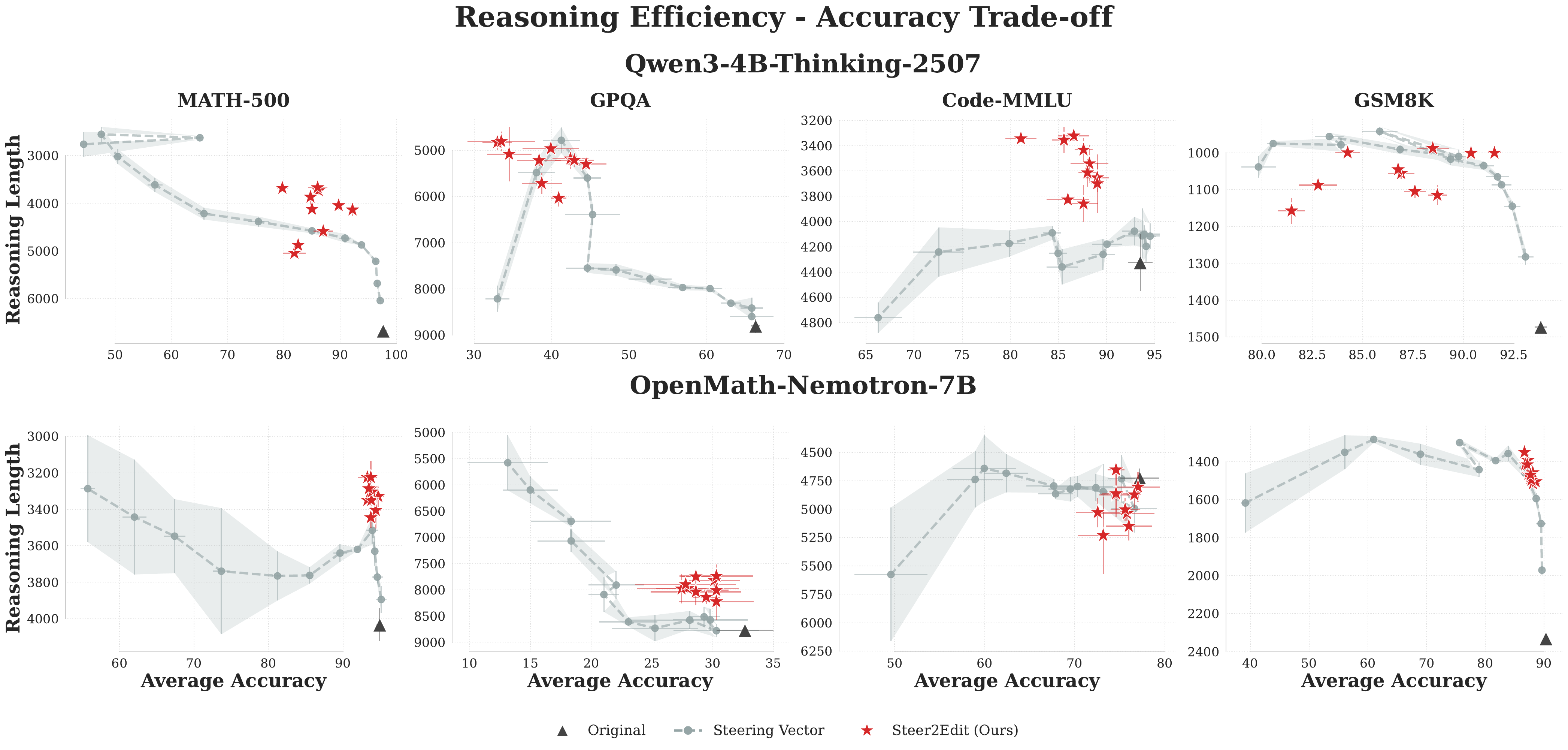}
    \caption{
    Per-dataset accuracy--reasoning-length trade-offs on GSM8K, MATH-500,
    GPQA, and Code-MMLU for Qwen3-4B-Thinking-2507 and OpenMath-Nemotron-7B.
    \textsc{\textbf{Steer2Edit}} reduces reasoning length while maintaining accuracy
    across all datasets.
    }
    \label{fig:appendix_efficiency_per_dataset}
\end{figure}

\newpage

\section{Ablation Study of \textsc{\textbf{Steer2Edit}}: Empirical Justification of Formal Design Choices}\label{app:ablation}

In this section, we validate the design choices of \textsc{\textbf{Steer2Edit}} through a systematic ablation study. We decompose the rank-1 weight update, $\Delta W_i = \lambda_i u_i k_i^\top$, and independently modify its three core components: the \textbf{Input Direction} ($k$), the \textbf{Importance Score} ($g$), and the \textbf{Edit Magnitude} ($\lambda$), while holding the others constant.

This study aims to determine whether the performance of \textsc{\textbf{Steer2Edit}} arises from its specific geometric and statistical formulations—such as using cosine similarity for scoring or Elastic-Net for sparsity—rather than generic weight modifications. Table~\ref{tab:ablation_definitions} defines the five ablation variants tested and the specific hypothesis each one investigates.

\begin{table}[H]
\centering
\caption{\textbf{Ablation Definitions.} We categorize variants by the component they modify: Input Direction ($k$), Importance Score ($g$), or Magnitude ($\lambda$).}
\label{tab:ablation_definitions}
\vspace{0.5em}
\small
\setlength{\tabcolsep}{8pt}
\begin{tabular*}{\textwidth}{@{\extracolsep{\fill}} l l p{0.6\textwidth} @{}}
\toprule
\textbf{Ablation Category} & \textbf{Variant} & \textbf{Definition and Hypothesis Tested} \\
\midrule
\multirow{2}{*}{\textbf{Input} $k$} & $k_{\text{mean}}$ & Sets $k_i \leftarrow \mu_i$, where $\mu_i = \mathbb{E}[h_i]$. Tests if data statistics alone suffice without directional sensitivity. \\
 & $k_{\text{svd}}$ & Sets $k_i \leftarrow \mathbf{v}_1$, where $\mathbf{v}_1$ is the top right singular vector of $W_i$. Tests if intrinsic weight directions suffice. \\
\midrule
\textbf{Score} $g$ & $g_{\text{dot}}$ & Sets $g_i \leftarrow \hat{v}_i^\top (W_i \mu_i)$ (unnormalized dot product), removing the normalization by $\|W_i \mu_i\|_2$. Tests the effect of component-output normalization. \\

\midrule
\multirow{2}{*}{\textbf{Magnitude} $\lambda$} & $\ell_0$ & Selects top-$K$ components (hard threshold) matching Elastic-Net sparsity. Tests if sparsity alone explains gains. \\
 & $\ell_2$ & Uses Elastic-Net with $\alpha=0$ (Ridge regularization), resulting in dense edits. Tests if dense edits can preserve utility. \\
\bottomrule
\end{tabular*}
\end{table}

\subsection{Unified Performance Summary}

To facilitate a high-level comparison, Table~\ref{tab:unified_summary} aggregates the results across all three behavioral settings: Safety Alignment, Truthfulness, and Efficient Reasoning.

We normalize the attribute scores to a common $[0,1]$ scale. For \textbf{Safety} and \textbf{Truthfulness}, we use the raw percentage divided by 100. For \textbf{Efficient Reasoning}, where lower length is better, we define the efficiency score as $\min(1, L_{\textsc{\textbf{Steer2Edit}}} / L_{\text{Ablation}})$. \textsc{\textbf{Steer2Edit}} consistently achieves the highest combined \textbf{Attribute $\times$ Utility} score, demonstrating that precise input alignment, normalized scoring, and sparse regularization are all critical for optimal performance.

\begin{table}[H]
\centering
\caption{\textbf{Unified Performance Summary.} Results are averaged across 6 models (2 per setting). \textsc{\textbf{Steer2Edit}} achieves the best global trade-off between targeting the attribute and maintaining downstream utility.}
\label{tab:unified_summary}
\vspace{0.5em}
\footnotesize
\setlength{\tabcolsep}{6pt}
\begin{tabular*}{\textwidth}{@{\extracolsep{\fill}} l l | ccc | ccc | cc | c @{}}
\toprule
\multirow{2}{*}{\textbf{Category}} & \multirow{2}{*}{\textbf{Variant}} & \multicolumn{3}{c|}{\textbf{Normalized Attribute Score} $\uparrow$} & \multicolumn{3}{c|}{\textbf{Normalized Utility Score} $\uparrow$} & \multicolumn{2}{c|}{\textbf{Overall Average} $\uparrow$} & \multirow{2}{*}{\textbf{Attr} $\times$ \textbf{Util}} \\
\cmidrule(lr){3-5} \cmidrule(lr){6-8} \cmidrule(lr){9-10}
 & & Safety & Truth & Effic. & Safety & Truth & Effic. & \textbf{Attribute} & \textbf{Utility} & \\
\midrule
\textbf{Full Method} & \textbf{Steer2Edit} & 0.807 & \textbf{0.550} & \textbf{1.000} & 0.341 & 0.536 & 0.755 & \textbf{0.786} & 0.544 & \textbf{0.427} \\
\midrule
\multirow{2}{*}{\textbf{Input} $k$} & $k_{\text{mean}}$ & \textbf{0.827} & 0.531 & 0.926 & 0.279 & 0.350 & 0.589 & 0.761 & 0.406 & 0.309 \\
 & $k_{\text{svd}}$ & 0.536 & 0.474 & 0.757 & \textbf{0.400} & \textbf{0.572} & \textbf{0.805} & 0.589 & \textbf{0.592} & 0.349 \\
\midrule
\textbf{Score} $g$ & $g_{\text{dot}}$ & 0.102 & 0.509 & 0.628 & 0.047 & 0.232 & 0.003 & 0.413 & 0.094 & 0.039 \\
\midrule
\multirow{2}{*}{\textbf{Magnitude} $\lambda$} & $\ell_0$ & 0.506 & 0.546 & 0.767 & 0.321 & 0.423 & 0.596 & 0.606 & 0.447 & 0.271 \\
 & $\ell_2$ & 0.000 & 0.580 & 0.472 & 0.032 & 0.226 & 0.394 & 0.350 & 0.217 & 0.076 \\
\bottomrule
\end{tabular*}
\end{table}

\paragraph{Key Insights: Normalization and Sparsity.}
Two critical design principles emerge from these aggregated results. First, \textbf{Score Normalization is paramount for stability.} The catastrophic failure of the unnormalized $g_{\text{dot}}$ variant (Overall Score: 0.039) reveals that raw activation magnitudes vary drastically across model layers. Without the cosine-similarity normalization used in \textsc{\textbf{Steer2Edit}}, the editing process becomes dominated by high-norm layers, destabilizing the model regardless of the target attribute. Second, \textbf{Sparsity is essential for utility preservation.} The dense $\ell_2$ baseline struggles to maintain downstream performance (Utility: 0.217), confirming that precise, sparse interventions are required to disentangle specific behaviors without overwriting the model's general knowledge base.

\subsection{Detailed Per-Setting Results}

The following subsections analyze the impact of each component across our three behavioral settings. We find that while different tasks exhibit unique sensitivities, the failures of the ablation baselines consistently point to the necessity of \textsc{\textbf{Steer2Edit}}'s three-pillared approach: precise directional alignment, normalized scoring, and sparse editing.

\paragraph{Safety Alignment: Selective vs. Uniform Triggering.}
Table~\ref{tab:ablation_safety_detailed} highlights the risks of uniformly triggered
edits.
The $k_{\text{mean}}$ baseline, which activates edits based on the global mean activation
$\mathbb{E}[x]$, achieves high safety (e.g., 92.20\% refusal on ADV-LLM) but substantially
reduces utility (21.75\% vs.\ 28.00\% for \textsc{\textbf{Steer2Edit}}).
This indicates that activating edits in a largely input-agnostic manner leads to
significant degradation on benign downstream tasks.
Conversely, the dense $\ell_2$ baseline results in near-total model failure (0.05\%
safety), suggesting that safety-related behavior is mediated by localized components and
is disrupted by dense parameter modifications.
\textsc{\textbf{Steer2Edit}} avoids these failure modes by combining input-selective
activation with sparse, component-level edits.

\begin{table}[H]
\centering
\caption{\textbf{Safety Alignment Ablations (Detailed).} Metrics are Average Safety (Refusal Rate) and Average Utility (Accuracy).}
\label{tab:ablation_safety_detailed}
\small
\setlength{\tabcolsep}{4pt}
\begin{tabular*}{\textwidth}{@{\extracolsep{\fill}} l l c cc | cc | ccc @{}}
\toprule
\multirow{2}{*}{\textbf{Model}} & \multirow{2}{*}{\textbf{Variant}} & \multirow{2}{*}{\textbf{S $\times$ U}} & \textbf{Avg} & \textbf{Avg} & \multicolumn{2}{c|}{\textbf{Safety (Refusal Rate)}} & \multicolumn{3}{c}{\textbf{Utility (Accuracy)}} \\
 & & & \textbf{Safety} & \textbf{Util} & GCG & ADV-LLM & CommonsenseQA & Code-MMLU & GSM8K \\
\midrule
\multirow{6}{*}{\shortstack[l]{Llama2-\\7B-Chat}} 
 & Steer2Edit & \textbf{24.68} & \textbf{88.15} & 28.00 & \textbf{95.10} & 81.20 & 45.01 & 20.06 & 18.93 \\
 \cmidrule{2-10}
 & $k_{\text{mean}}$ & 19.12 & 87.90 & 21.75 & 83.60 & \textbf{92.20} & 30.14 & 19.57 & 15.53 \\
 & $k_{\text{svd}}$ & 22.82 & 66.35 & \textbf{34.39} & 90.80 & 41.90 & \textbf{50.77} & \textbf{30.49} & \textbf{21.90} \\
 \cmidrule{2-10}
 & $g_{\text{dot}}$ & 0.69 & 15.35 & 4.49 & 23.30 & 7.40 & 6.45 & 5.37 & 1.65 \\
 \cmidrule{2-10}
 & $\ell_0$ & 7.66 & 41.25 & 18.58 & 36.40 & 46.10 & 27.26 & 25.43 & 3.05 \\
 & $\ell_2$ & 0.00 & 0.05 & 6.33 & 0.10 & 0.00 & 3.60 & 14.88 & 0.52 \\
\bottomrule
\toprule
\multirow{6}{*}{\shortstack[l]{Mistral-7B\\-Instruct}} 
 & Steer2Edit & \textbf{29.38} & 73.15 & 40.16 & 75.20 & 71.10 & \textbf{57.00} & 50.37 & 13.10 \\
 \cmidrule{2-10}
 & $k_{\text{mean}}$ & 26.33 & \textbf{77.45} & 34.00 & 71.60 & \textbf{83.30} & 54.24 & 45.55 & 2.22 \\
 & $k_{\text{svd}}$ & 18.59 & 40.85 & 45.51 & 68.80 & 12.90 & 53.61 & \textbf{51.95} & 30.98 \\
 \cmidrule{2-10}
 & $g_{\text{dot}}$ & 0.25 & 5.05 & 4.96 & 0.90 & 9.20 & 5.43 & 9.45 & 0.01 \\
 \cmidrule{2-10}
 & $\ell_0$ & 27.29 & 59.85 & \textbf{45.60} & \textbf{77.20} & 42.50 & 55.96 & 48.35 & \textbf{32.48} \\
 & $\ell_2$ & 0.00 & 0.00 & 0.00 & 0.00 & 0.00 & 0.00 & 0.00 & 0.00 \\
\bottomrule
\end{tabular*}
\end{table}

\newpage

\paragraph{Truthfulness: The Critical Role of Sparsity.}
Table~\ref{tab:ablation_truth_detailed} provides strong evidence for the necessity of
sparse editing ($\ell_0$ regularization).
The dense $\ell_2$ variant, which modifies all parameters in the target block, causes
catastrophic utility collapse (0.00\% on Llama3-8B), demonstrating that dense edits
severely disrupt the model’s general capabilities.
Additionally, we observe an informative trade-off with $k_{\text{svd}}$: while it
preserves high utility (65.11\% on Llama3), it fails to significantly improve
truthfulness (39.61\%).
This suggests that aligning edits with the model’s intrinsic dominant activation
patterns is insufficient to induce a shift toward truthful behavior.
Only \textsc{\textbf{Steer2Edit}} balances these objectives: it uses $k$ to selectively
activate edits on semantically relevant inputs, while $\ell_0$ sparsity restricts the
intervention to a small set of behaviorally relevant neurons.

\begin{table}[H]
\centering
\caption{\textbf{Truthfulness Ablations (Detailed).} Metrics are Average Truthfulness (TruthfulQA Accuracy) and Average Utility (Accuracy).}
\label{tab:ablation_truth_detailed}
\small
\setlength{\tabcolsep}{4pt}
\begin{tabular*}{\textwidth}{@{\extracolsep{\fill}} l l c cc | c | ccc @{}}
\toprule
\multirow{2}{*}{\textbf{Model}} & \multirow{2}{*}{\textbf{Method}} & \multirow{2}{*}{\textbf{T $\times$ U}} & \textbf{Avg} & \textbf{Avg} & \textbf{Attribute} & \multicolumn{3}{c}{\textbf{Utility (Accuracy)}} \\
 & & & \textbf{Truth} & \textbf{Util} & TruthfulQA & CommonsenseQA & Code-MMLU & GSM8K \\
\midrule
\multirow{6}{*}{\shortstack[l]{Gemma2\\2B-IT}} 
 & Steer2Edit & 25.30 & 56.97 & 44.41 & 56.97 & \textbf{69.00} & 37.07 & 27.16 \\
 \cmidrule{2-9}
 & $k_{\text{mean}}$ & 24.93 & 55.75 & 44.71 & 55.75 & 68.59 & 38.29 & 27.24 \\
 & $k_{\text{svd}}$ & \textbf{27.26} & 55.26 & \textbf{49.33} & 55.26 & 68.12 & \textbf{39.39} & \textbf{40.48} \\
 \cmidrule{2-9}
 & $g_{\text{dot}}$ & 1.16 & \textbf{58.68} & 1.98 & \textbf{58.68} & 2.83 & 2.20 & 0.90 \\
 \cmidrule{2-9}
 & $\ell_0$ & 15.33 & 57.95 & 26.46 & 57.95 & 58.29 & 16.04 & 5.06 \\
 & $\ell_2$ & 26.15 & 57.95 & 45.12 & 57.95 & 63.62 & 36.95 & 34.78 \\
\bottomrule
\toprule
\multirow{6}{*}{\shortstack[l]{Llama3\\8B-Instruct}} 
 & Steer2Edit & \textbf{33.27} & 53.06 & 62.71 & 53.06 & 66.86 & \textbf{52.74} & 68.54 \\
 \cmidrule{2-9}
 & $k_{\text{mean}}$ & 12.74 & 50.37 & 25.29 & 50.37 & 39.67 & 25.61 & 10.60 \\
 & $k_{\text{svd}}$ & 25.79 & 39.61 & \textbf{65.11} & 39.61 & \textbf{71.30} & 51.71 & \textbf{72.32} \\
 \cmidrule{2-9}
 & $g_{\text{dot}}$ & 19.11 & 43.03 & 44.42 & 43.03 & 26.94 & 41.77 & 64.56 \\
 \cmidrule{2-9}
 & $\ell_0$ & 29.87 & 51.34 & 58.18 & 51.34 & 64.14 & 47.13 & 63.26 \\
 & $\ell_2$ & 0.00 & \textbf{57.95} & 0.00 & \textbf{57.95} & 0.00 & 0.00 & 0.00 \\
\bottomrule
\end{tabular*}
\end{table}

\paragraph{Efficient Reasoning: Stability via Normalization.}
Table~\ref{tab:ablation_reasoning_detailed} highlights the critical role of score
normalization for stable reasoning control.
The $g_{\text{dot}}$ variant, which uses the raw dot product, fails pathologically: it
reduces the reasoning length to just 21 tokens (Qwen3-4B) while collapsing utility to
near zero (0.13\%).
This behavior arises because activation norms vary substantially across layers; without
the cosine normalization used in \textsc{\textbf{Steer2Edit}}, edit magnitudes become
dominated by high-norm layers, leading to severe disruption of model behavior.
Conversely, the $k_{\text{svd}}$ baseline increases reasoning length (5351 vs.\ 3467 for
\textsc{\textbf{Steer2Edit}}), indicating that activating edits along the model’s
intrinsic dominant activation patterns is insufficient for improving efficiency.
Together, these results show that effective reasoning-length control requires both
normalized importance scoring and task-specific edit activation, as implemented in
\textsc{\textbf{Steer2Edit}}.

\begin{table}[H]
\centering
\caption{\textbf{Efficient Reasoning Ablations (Detailed).} Metrics are Reasoning Length (Lower is Better) and Utility (Higher is Better). The \textbf{U / L} metric represents efficiency ($\text{Utility} / \text{Length} \times 100$).}
\label{tab:ablation_reasoning_detailed}
\scriptsize
\setlength{\tabcolsep}{4pt}
\begin{tabular*}{\textwidth}{@{\extracolsep{\fill}} l l c cc | cccc | cccc @{}}
\toprule
\multirow{2}{*}{\textbf{Model}} & \multirow{2}{*}{\textbf{Variant}} & \multirow{2}{*}{\textbf{U / L}} & \textbf{Avg} & \textbf{Avg} & \multicolumn{4}{c|}{\textbf{Reasoning Length (Lower is Better)}} & \multicolumn{4}{c}{\textbf{Utility (Higher is Better)}} \\
 & & & \textbf{Len} & \textbf{Util} & MATH-500 & GPQA & Code-MMLU & GSM8K & MATH-500 & GPQA & Code-MMLU & GSM8K \\
\midrule
\multirow{6}{*}{\shortstack[l]{Qwen3\\4B-Thinking}} 
 & Steer2Edit & \textbf{2.28} & 3467 & 78.95 & 4136 & 5299 & 3433 & 1000 & 92.2 & 44.4 & 87.6 & 91.5 \\
 \cmidrule{2-13}
 & $k_{\text{mean}}$ & 1.56 & 3266 & 51.05 & 3188 & 2728 & 6129 & 1018 & 53.2 & 10.3 & 78.1 & 62.7 \\
 & $k_{\text{svd}}$ & 1.64 & 5351 & \textbf{87.99} & 6389 & 8780 & 4622 & 1613 & \textbf{96.9} & \textbf{67.5} & \textbf{93.9} & \textbf{93.6} \\
 \cmidrule{2-13}
 & $g_{\text{dot}}$ & 0.62 & \textbf{21} & 0.13 & \textbf{23} & \textbf{33} & \textbf{10} & \textbf{18} & 0.3 & 0.0 & 0.0 & 0.2 \\
 \cmidrule{2-13}
 & $\ell_0$ & 0.73 & 6506 & 47.61 & 8646 & 5202 & 4698 & 7477 & 44.7 & 36.0 & 65.7 & 44.1 \\
 & $\ell_2$ & 0.08 & 22476 & 17.63 & 32223 & 18987 & 6420 & 32274 & 3.0 & 15.2 & 50.4 & 2.0 \\
\bottomrule
\toprule
\multirow{6}{*}{\shortstack[l]{OpenMath-\\Nemotron-7B}} 
 & Steer2Edit & 1.62 & 4445 & 72.12 & 3405 & \textbf{7821} & 5038 & \textbf{1515} & 94.4 & 30.1 & \textbf{75.8} & 88.1 \\
 \cmidrule{2-13}
 & $k_{\text{mean}}$ & 1.28 & 5216 & 66.69 & 3838 & 8814 & 6522 & 1691 & 92.5 & 25.8 & 66.7 & 81.9 \\
 & $k_{\text{svd}}$ & 1.42 & 5131 & \textbf{73.01} & 4188 & 8880 & 4985 & 2472 & 93.9 & \textbf{32.8} & 75.6 & \textbf{89.7} \\
 \cmidrule{2-13}
 & $g_{\text{dot}}$ & 0.00 & 17338 & 0.52 & 18380 & 16628 & 17163 & 17182 & 0.9 & 0.0 & 0.6 & 0.6 \\
 \cmidrule{2-13}
 & $\ell_0$ & \textbf{1.64} & \textbf{4356} & 71.60 & 3422 & 7971 & \textbf{4503} & 1528 & \textbf{94.5} & 29.6 & 73.4 & 88.9 \\
 & $\ell_2$ & 1.09 & 5633 & 61.23 & \textbf{2911} & 12686 & 5160 & 1773 & 89.5 & 17.7 & 66.1 & 71.7 \\
\bottomrule
\end{tabular*}
\end{table}

\newpage

\section{Component-Wise Budget Sensitivity Analysis}
\label{app:component_budget}

In the main paper, we show that the best-performing \textsc{\textbf{Steer2Edit}} configurations exhibit a consistent component-level structure: safety and truthfulness control rely on sparse edits to attention heads, whereas reasoning efficiency is primarily governed by distributed MLP neuron edits.

To further validate that this structural separation is intrinsic to the underlying mechanisms, we conduct a controlled \emph{component-wise budget sensitivity analysis}. In this study, we fix the sparsity parameter $\alpha$ and vary the regularization budget of one component class at a time, while disabling edits to the other class by sending its budget to infinity ($\rho \to \infty$). This isolates how changes in attention and MLP budgets individually influence the attribute--utility trade-off.

\subsection{Safety Alignment: Sensitivity to Attention Budget}

Figure~\ref{fig:safety_budget_sensitivity} illustrates how the safety--utility trade-off responds to changes in the attention and MLP budgets when considered in isolation. Increasing the attention budget $\rho_{\text{attn}}$ produces substantial gains in refusal rate at moderate utility cost, closely matching the best joint configurations reported in the main paper. In contrast, varying the MLP budget $\rho_{\text{mlp}}$ leads to markedly weaker safety improvements and often degrades utility more rapidly.

This asymmetric sensitivity indicates that safety alignment is primarily mediated by a small number of attention heads. The result is consistent with the sparsity patterns observed in Figure~\ref{fig:safety_edit_distribution}, where non-zero edit coefficients are concentrated in late-layer attention components, with minimal contribution from MLP neurons.

\begin{figure}[H]
    \centering
    \includegraphics[width=0.8\linewidth]{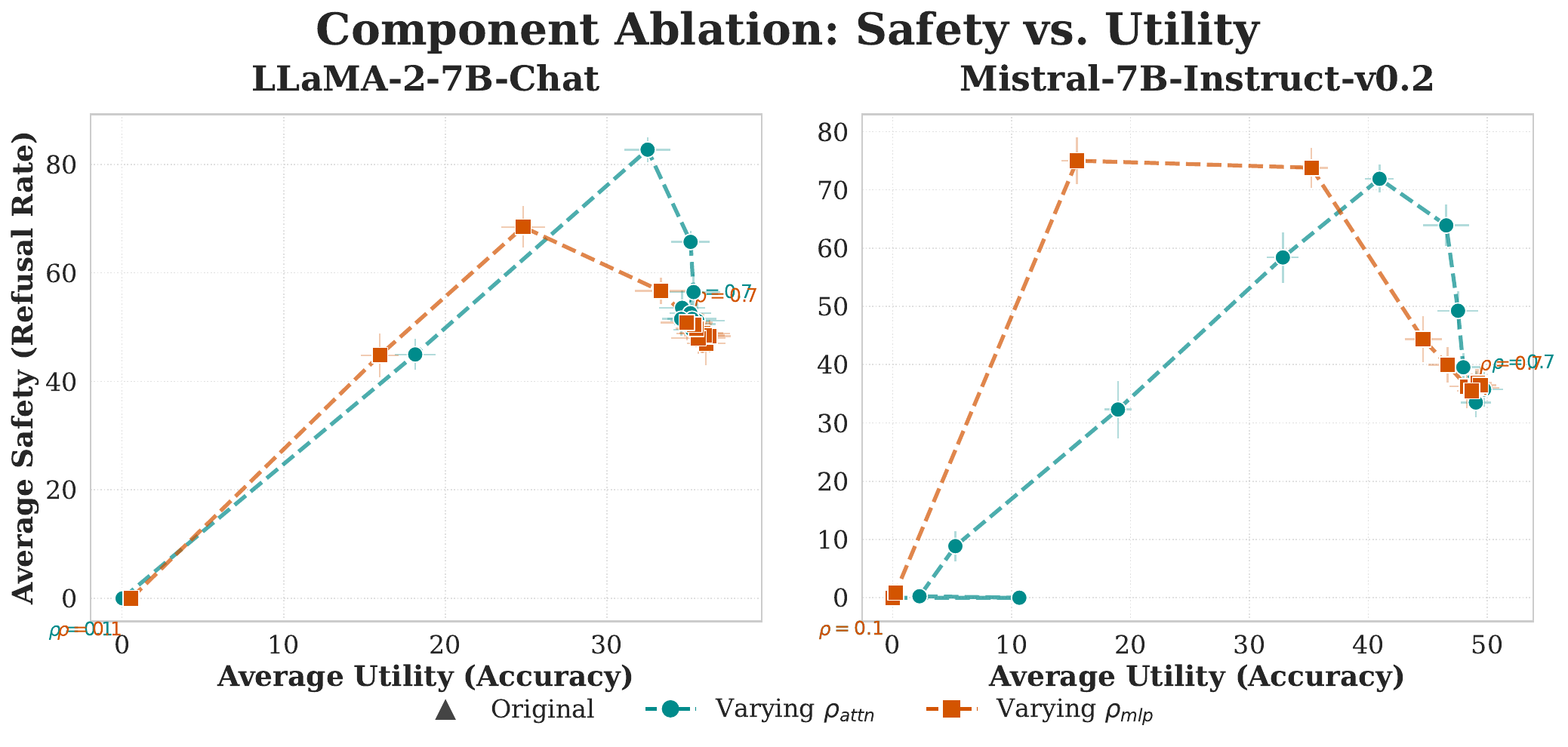}
    \caption{
    \textbf{Component-wise budget sensitivity for safety alignment.}
    We fix the sparsity parameter $\alpha$ and vary the attention regularization budget $\rho_{\text{attn}}$ while disabling MLP edits by taking $\rho_{\text{mlp}} \to \infty$, and vice versa. Improvements in refusal rate are primarily driven by changes in the attention budget, whereas varying the MLP budget yields limited safety gains and earlier utility degradation.
    }
    \label{fig:safety_budget_sensitivity}
\end{figure}

\subsection{Truthfulness: Sensitivity Dominated by Attention}

Figure~\ref{fig:truth_budget_sensitivity} reports the component-wise budget sensitivity for truthfulness promotion. Across both evaluated models, increasing the attention budget consistently yields larger improvements in truthful preference accuracy than increasing the MLP budget under the same sparsity constraint. MLP-only edits fail to recover the trade-off frontier achieved by attention edits.

These findings align with the component-level edit distributions shown in Figure~\ref{fig:truth_edit_distribution}, which reveal that truthfulness control is achieved through sparse, localized attention interventions. Notably, several models exhibit predominantly negative edit coefficients, suggesting that suppressing hallucination-promoting attention heads is more effective than broadly modifying MLP computation.

\begin{figure}[H]
    \centering
    \includegraphics[width=0.8\linewidth]{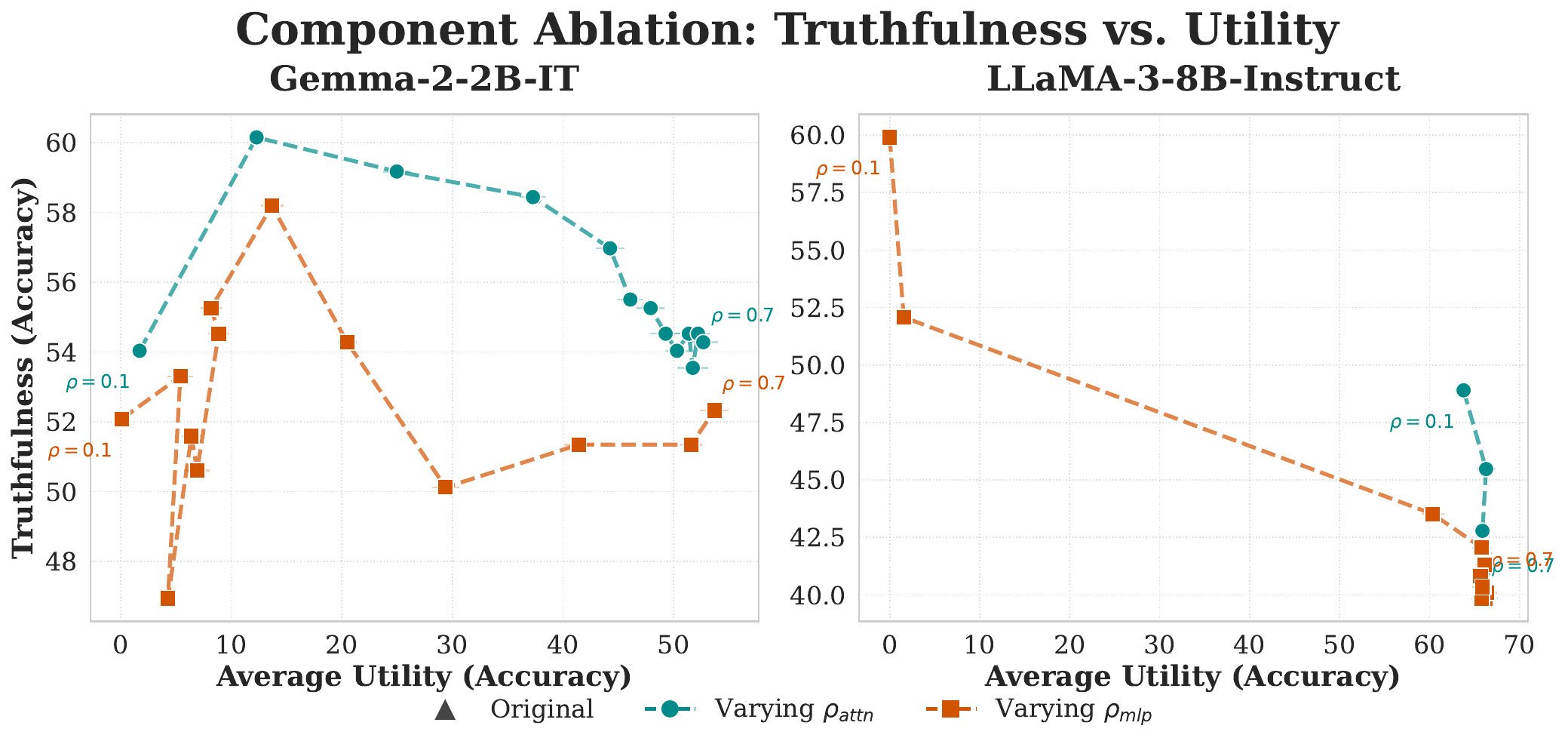}
    \caption{
    \textbf{Component-wise budget sensitivity for truthfulness control.}
    Truthfulness improvements are strongly sensitive to the attention budget $\rho_{\text{attn}}$, while varying the MLP budget $\rho_{\text{mlp}}$ in isolation results in substantially smaller gains at comparable downstream utility.
    }
    \label{fig:truth_budget_sensitivity}
\end{figure}

\subsection{Efficient Reasoning: Sensitivity Dominated by MLP Budget}

Figure~\ref{fig:efficiency_budget_sensitivity} shows that reasoning efficiency exhibits a qualitatively different sensitivity pattern. Increasing the MLP budget $\rho_{\text{mlp}}$ leads to a smooth and substantial reduction in reasoning length while preserving accuracy. In contrast, varying the attention budget $\rho_{\text{attn}}$ in isolation produces only marginal efficiency improvements, even at large budgets.

This behavior mirrors the dense MLP edit patterns observed in Figure~\ref{fig:reasoning_edit_distribution} and confirms that efficient reasoning control requires coordinated, distributed modifications to MLP neurons across layers. Unlike safety and truthfulness, which are governed by localized attention-based circuits, reasoning efficiency emerges from broad MLP-based computation.

\begin{figure}[H]
    \centering
    \includegraphics[width=0.8\linewidth]{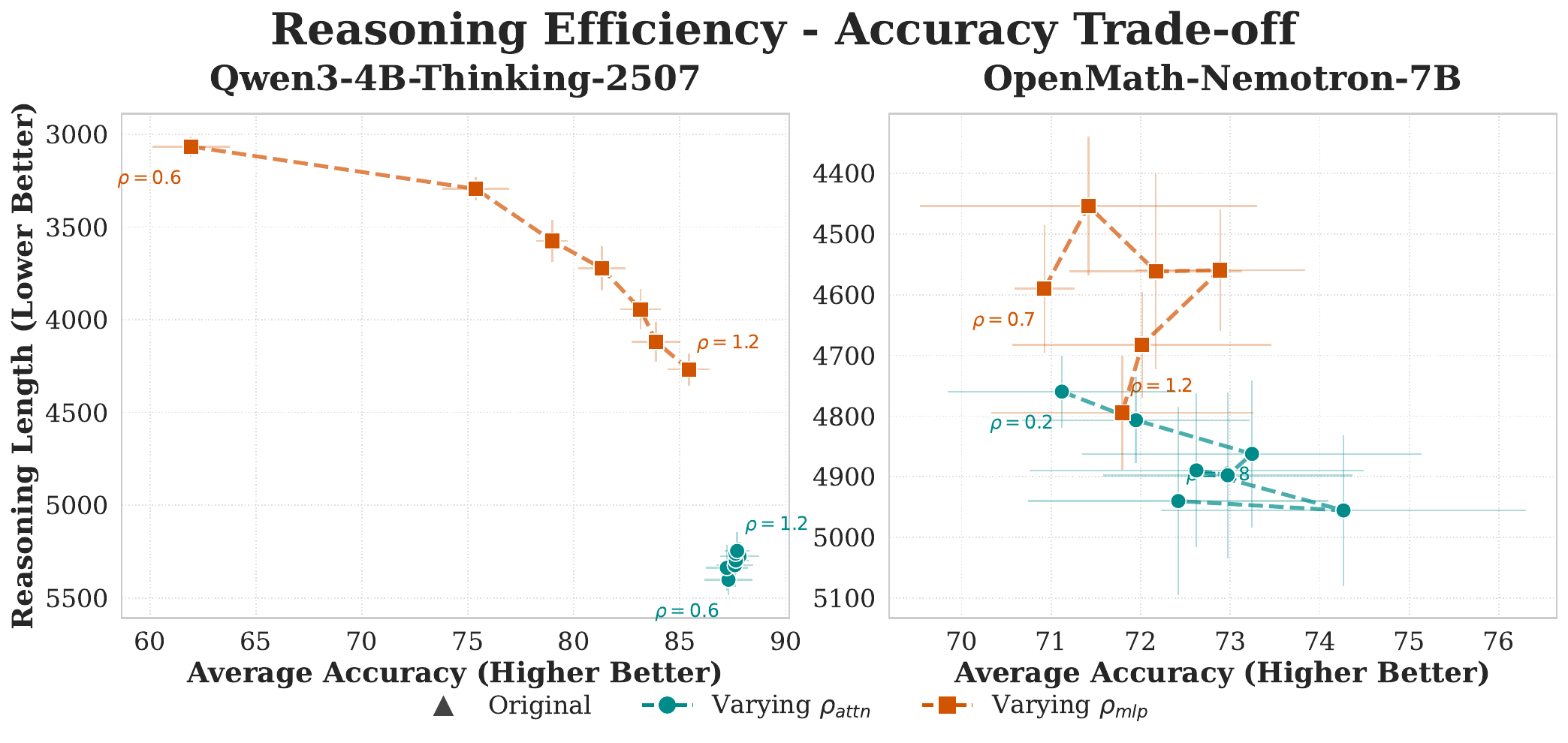}
    \caption{
    \textbf{Component-wise budget sensitivity for reasoning efficiency.}
    Reductions in reasoning length are strongly influenced by the MLP budget $\rho_{\text{mlp}}$, while varying the attention budget $\rho_{\text{attn}}$ yields only minor efficiency gains. This indicates that efficient reasoning is governed by distributed MLP computation rather than sparse attention circuits.
    }
    \label{fig:efficiency_budget_sensitivity}
\end{figure}

\paragraph{Summary.}
Across all behavioral control settings, this component-wise budget sensitivity analysis establishes a clear correspondence between the component class whose budget most strongly influences the trade-off frontier and the components receiving non-zero edits in the best-performing \textsc{\textbf{Steer2Edit}} configurations. Safety and truthfulness are attention-dominated, whereas reasoning efficiency is MLP-dominated, providing further evidence that \textsc{\textbf{Steer2Edit}} uncovers genuine, setting-dependent circuit structure rather than artifacts of hyperparameter tuning.

\newpage

\section{Additional Baselines: Comparing \textsc{Steer2Edit} with Training-Based Methods}
\label{app:finetune_baselines}

We compare \textsc{Steer2Edit} against training-based adaptation methods that directly optimize model parameters toward a target behavior.
Specifically, we consider \textbf{full-parameter fine-tuning} and \textbf{rank-1 LoRA fine-tuning} as additional baselines.

\paragraph{Setup and comparison protocol.}
For each control setting (safety, truthfulness, and efficient reasoning), we fine-tune models on the \emph{same probing dataset used to extract steering vectors}.
Training uses the \emph{positive set} (i.e., examples that exhibit the target attribute), so the model is explicitly optimized to imitate the desired behavior.
We evaluate the resulting models using the same attribute and downstream utility metrics as in the main experiments and report trade-off curves alongside activation steering and \textsc{Steer2Edit}.
The \textbf{full} fine-tuning baseline updates all model parameters.
The \textbf{rank-1} baseline applies LoRA adapters with rank $r=1$, inserted into the standard attention projections (\texttt{q\_proj}, \texttt{k\_proj}, \texttt{v\_proj}, \texttt{o\_proj}) and MLP projections (\texttt{gate\_proj}, \texttt{up\_proj}, \texttt{down\_proj}), while keeping the backbone weights frozen.
All baselines are trained with standard supervised objectives and comparable training budgets.

\subsection{Safety Alignment}
\label{app:finetune_safety}

Figure~\ref{fig:finetune_safety_tradeoff} shows the safety--utility trade-off for models fine-tuned on the safety probing positive set.
Full fine-tuning increases refusal rates but often does so by globally shifting the model’s response distribution, leading to over-refusal on benign queries and sharp drops in downstream utility, particularly for Mistral.
Rank-1 LoRA produces minimal changes, indicating limited capacity to induce reliable safety behavior under this supervision.

\begin{figure}[H]
    \centering
    \includegraphics[width=0.8\linewidth]{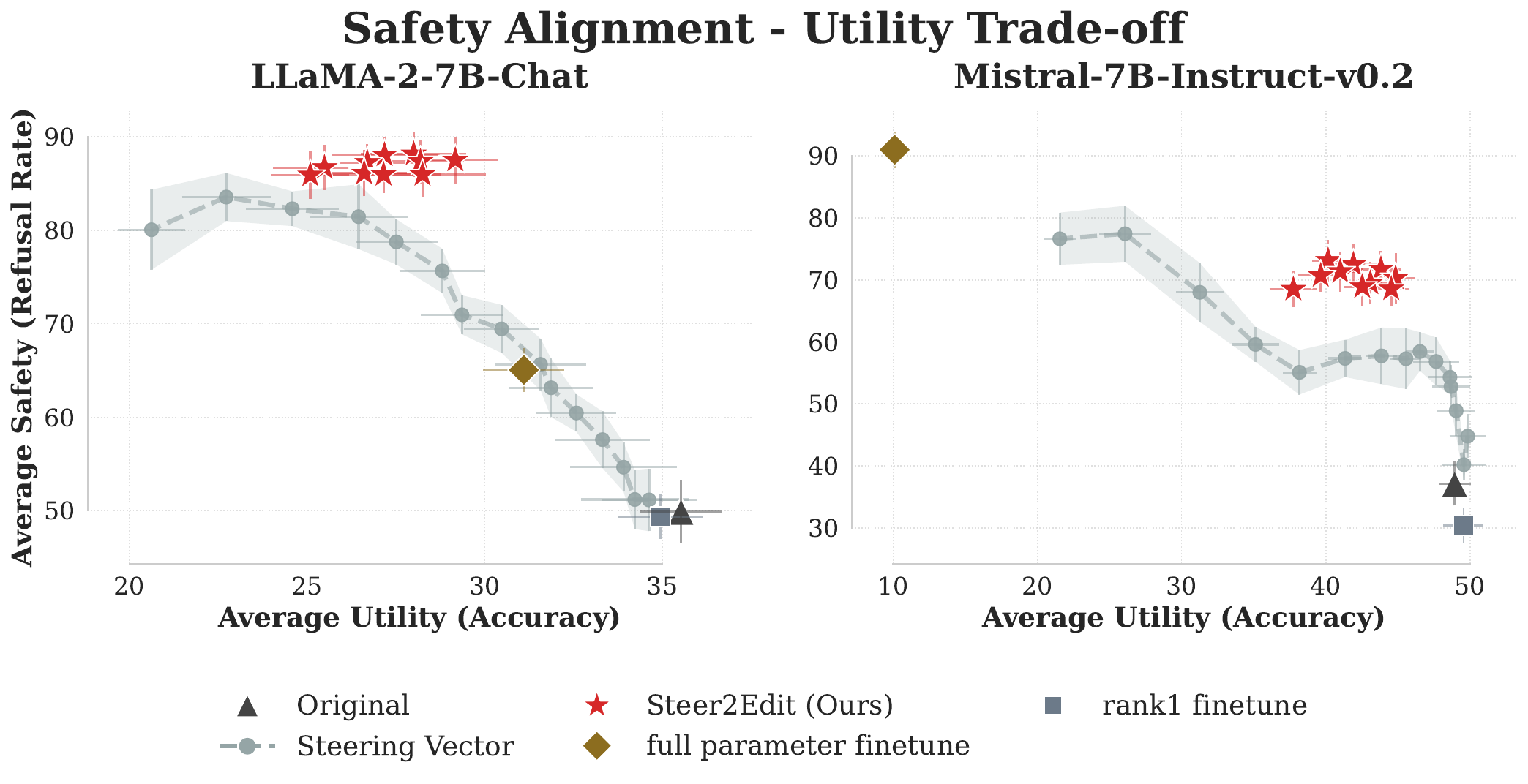}
    \caption{
    \textbf{Safety--utility trade-off with training-based baselines.}
    Full fine-tuning improves refusal rates but frequently collapses utility due to over-refusal, especially in the low-data regime.
    Rank-1 LoRA has negligible effect.
    These trends indicate that training-based optimization on small positive sets induces coarse, global behavioral shifts rather than selective safety control.
    }
    \label{fig:finetune_safety_tradeoff}
\end{figure}

\subsection{Truthfulness}
\label{app:finetune_truthfulness}

Figure~\ref{fig:finetune_truth_tradeoff} reports the truthfulness--utility trade-off for models fine-tuned on the truthfulness probing positive set.
Full fine-tuning yields modest improvements in TruthfulQA accuracy, but these gains are typically accompanied by noticeable degradation in downstream utility, suggesting broad shifts in the model’s answer distribution rather than selective promotion of truthfulness.
Rank-1 LoRA again exhibits little effect.

\begin{figure}[H]
    \centering
    \includegraphics[width=0.8\linewidth]{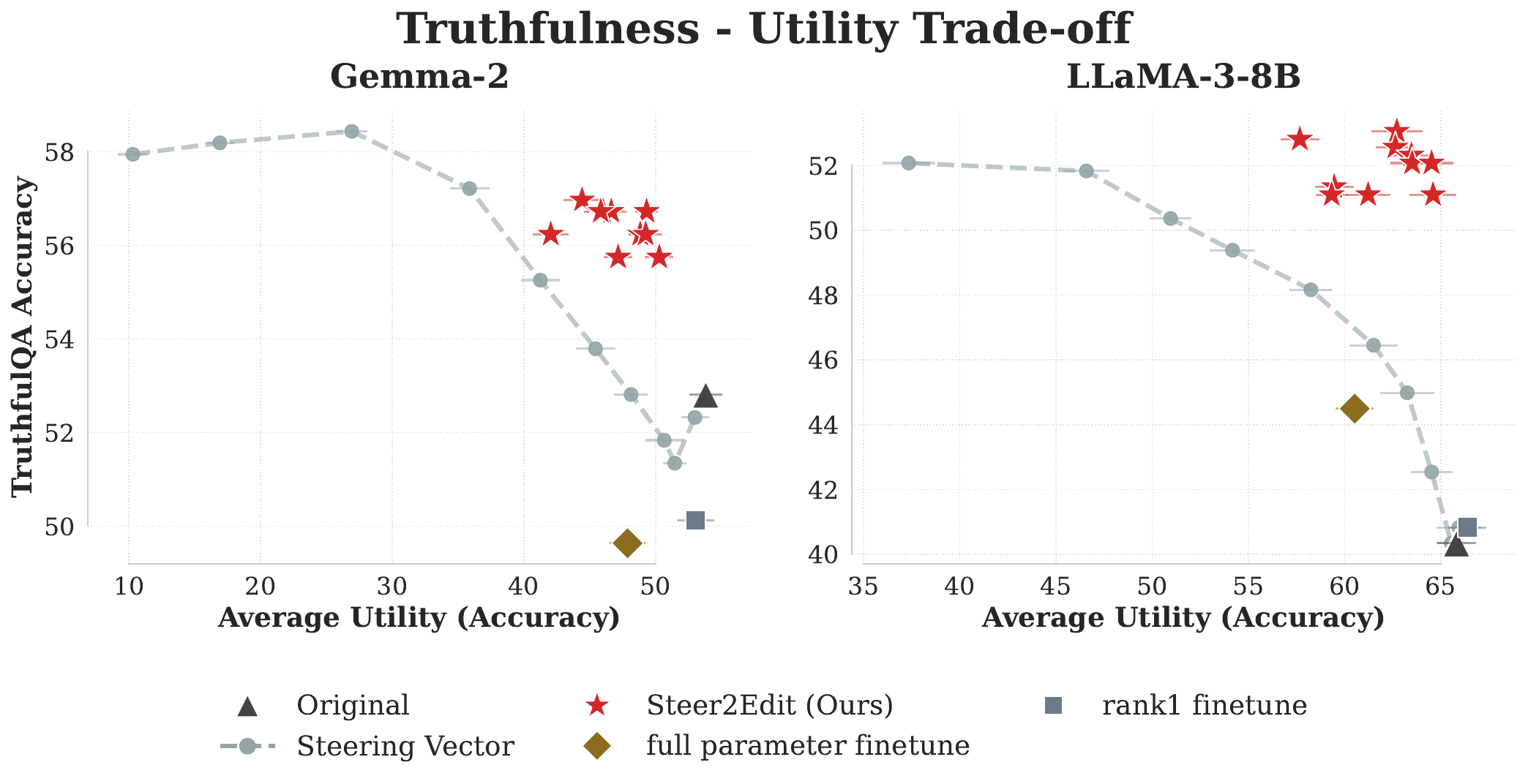}
    \caption{
    \textbf{Truthfulness--utility trade-off with training-based baselines.}
    Fine-tuning on the positive set improves TruthfulQA performance but often incurs utility loss, reflecting over-regularization of response behavior.
    Rank-1 LoRA provides insufficient adaptation capacity to meaningfully alter truthfulness.
    }
    \label{fig:finetune_truth_tradeoff}
\end{figure}

\subsection{Efficient Reasoning}
\label{app:finetune_reasoning}

Figure~\ref{fig:finetune_reasoning_tradeoff} presents the reasoning efficiency--accuracy trade-off for models fine-tuned on the efficient-reasoning probing positive set.
Full fine-tuning can encourage shorter generations, but the resulting reductions in reasoning length are generally comparable to activation steering and do not consistently surpass it.
Rank-1 LoRA again produces minimal changes, indicating limited capacity to meaningfully influence reasoning behavior under this supervision.

\begin{figure}[H]
    \centering
    \includegraphics[width=0.8\linewidth]{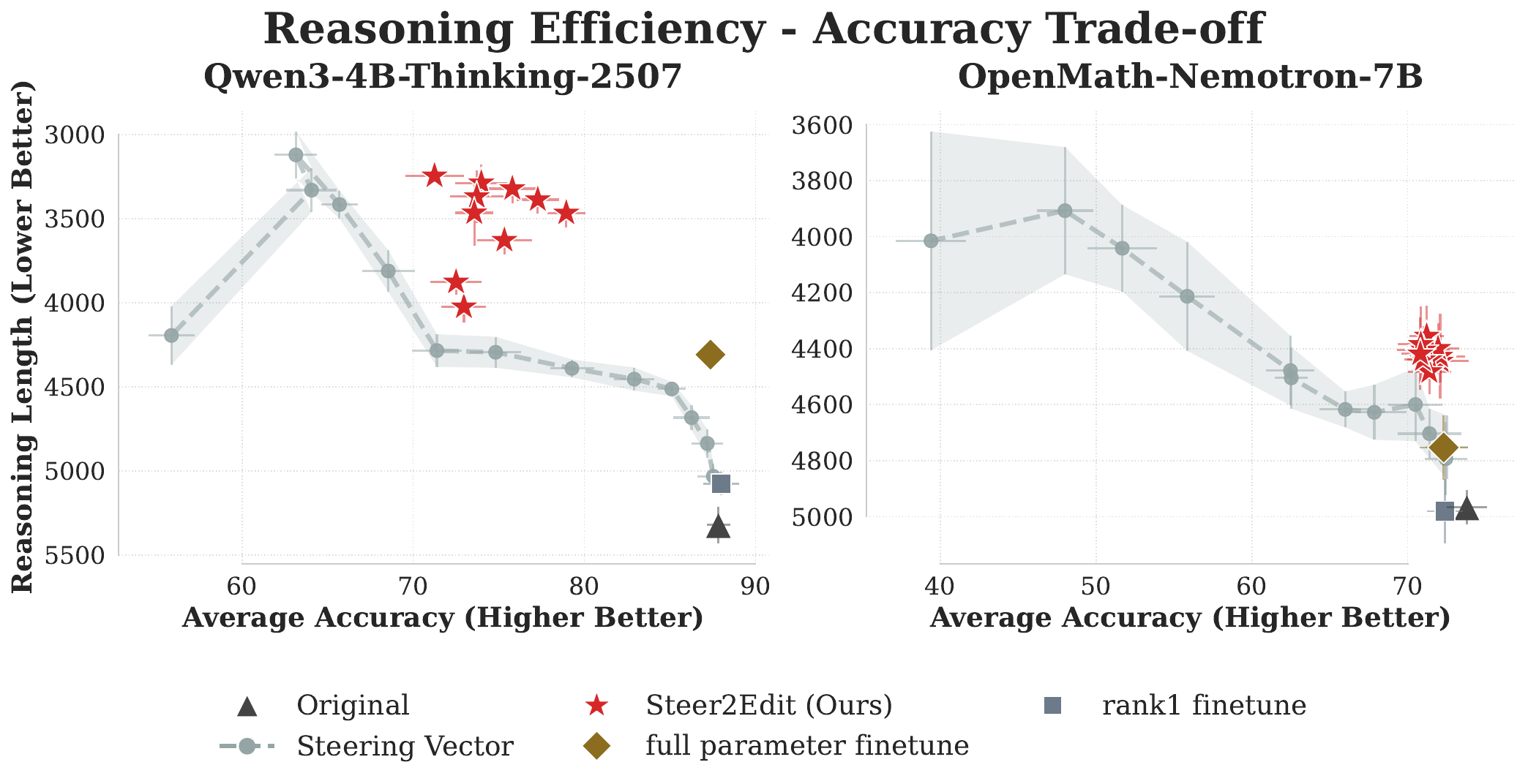}
    \caption{
    \textbf{Reasoning efficiency--accuracy trade-off with training-based baselines.}
    Full fine-tuning reduces reasoning length but largely matches the trade-off achieved by activation steering, without clear advantages.
    Rank-1 LoRA has little effect.
    }
    \label{fig:finetune_reasoning_tradeoff}
\end{figure}

\paragraph{Summary.}
Across all three control settings, fine-tuning on the probing dataset can move models toward the target attribute, but typically does so by inducing broad distributional shifts that trace a trade-off curve similar to activation steering.
These limitations are most evident in the low-data regime considered here, where the probing sets are intentionally small and narrowly targeted.
Rank-1 LoRA consistently exhibits weak effects across all settings.
In contrast, \textsc{Steer2Edit} achieves more favorable trade-offs while remaining training-free and component-interpretable, highlighting the benefit of converting steering diagnostics into targeted weight edits rather than optimizing behavior through global parameter updates.